\newtheorem{theorem}{Theorem}[section]
\newtheorem{corollary}[theorem]{Corollary}
\newtheorem{lemma}[theorem]{Lemma}
\theoremstyle{definition}
\newtheorem{definition}[theorem]{Definition}
\theoremstyle{remark}
\newcommand{\kmax}{\kappa_{\text{max}}}
\newcommand{\ka}{\kappa}
\newcommand{\z}{\theta}
\newcommand{\be}{\begin{equation}}
\newcommand{\bs}{\begin{split}}
\newcommand{\es}{\end{split}}
\newcommand{\ee}{\end{equation}}
\newcommand{\kamax}{\kappa_{\text{max}}}
\newcommand{\ktop}{\kappa_{\text{top}}}
\newcommand{\sigmax}{\sigma_{\text{max}}}
\newcommand{\rhomax}{\rho_{\text{max}}}
\newcommand{\varD}{(\vect{p}_s, \kappa_{\text{top}}, \kappa_f, \Delta)}
\newcommand{\varmid}{(\vect{p}_s, \kappa_{\text{top}}, \kappa_f, \Delta=s_3)}
\newcommand{\varokmid}{(\vect{p}_s, \kappa_{\text{top}}, \kappa_f=0, \Delta=s_3)}
\newcommand{\J}{\mathcal{J}}
\newcommand{\vect}[1]{\boldsymbol{#1}}
\newcommand{\vmax}{v_{\text{max}}}
\title{\LARGE \bf
Trajectory Generation for Autonomous Vehicles Using Atropoids}
\title{\LARGE \bf Tunable Trajectory Planner Using $G3$ Curves}
\author{Alexander Botros and Stephen L.\ Smith
\thanks{This work is supported in part by the Natural Sciences and Engineering Research Council of Canada (NSERC)}
\thanks{The authors are with the Department of Electrical and Computer Engineering,
        University of Waterloo, 200 University Ave W, Waterloo ON, Canada, N2L 3G1
    (\botros, \smith)
        }
}
\begin{document}
\maketitle


\begin{abstract}
    Trajectory planning is commonly used as part of a local planner in autonomous driving. This paper considers the problem of planning a continuous-curvature-rate trajectory between fixed start and goal states that minimizes a tunable trade-off between passenger comfort and travel time. The problem is an instance of infinite dimensional optimization over two continuous functions: a path, and a velocity profile. We propose a simplification of this problem that facilitates the discretization of both functions.  This paper also proposes a method to quickly generate minimal-length paths between start and goal states based on a single tuning parameter: the second derivative of curvature. Furthermore, we discretize the set of velocity profiles along a given path into a selection of acceleration way-points along the path. Gradient-descent is then employed to minimize cost over feasible choices of the second derivative of curvature, and acceleration way-points, resulting in a method that repeatedly solves the path and velocity profiles in an iterative fashion. Numerical examples are provided to illustrate the benefits of the proposed methods.
\end{abstract}

\section{Introduction}
In motion planning for autonomous vehicles (or mobile robots), a trajectory generation technique is commonly used as part of a local planner~\cite{urmson2008autonomous}. The local planner produces many local trajectories satisfying constraints that reflect the vehicles dynamics, and discards those trajectories that collide with static or predicted dynamic obstacles. The best trajectory is selected according to a metric that captures the efficiency/length of the trajectory, and its smoothness or comfort~\cite{paden2016survey}. The trajectory is used as the reference for a tracking controller for a fixed amount of time, before the local planner produces a new reference. 

In this paper, we address the problem of generating a trajectory for an autonomous vehicle from a start state to a goal state. We focus on states of the form $(x,y,\z, \ka, v)$ where $(x,y)\in\mathbb{R}^2$ represents the planar location of the vehicle, $\z$ the heading, $\ka$ the curvature, and $v$ the velocity. In particular, we focus on computing trajectories that optimize a trade off between comfort and travel time~\cite{levinson2011towards, ziegler2014trajectory}. 

Comfort of a trajectory is related to the acceleration, rate of change of yaw, and jerk (i.e., the derivative of the acceleration) experienced by a particle (vehicle) sliding along it~\cite{ziegler2014trajectory}, \cite{huang2004fundamental}, \cite{articlejerk}. A common metric for comfort is the integral of the square of the jerk (IS jerk) over the trajectory~\cite{levinson2011towards}. When a vehicle is moving at constant speed, jerk is experienced lateral to the trajectory and is directly proportional to the derivative of curvature---called the \emph{sharpness}---of the vehicle trajectory. It is important to note that the smoothness and comfort of the resulting vehicle motion depends on both the reference trajectory and the tracking controller. 
However, by planning trajectories with low jerk, less effort is required from the controller to track the reference and ensure comfort~\cite{kyriakopoulos1994minimum}, particularly at high speeds~\cite{shin1992path}.

Often, trajectory planning is treated as a two part spatio-temporal problem~\cite{katrakazas2015real}. The first sub-problem deals with computing a \emph{path} from start to goal, while the second addresses how this path should be converted into a motion by computing a \emph{velocity profile}. Optimizing a travel time/comfort trade off over a set of paths and velocity profiles ensures the resulting trajectory's adherence to desirable properties like short travel time, and comfort. 

\begin{figure}[t]
    \centering
    \includegraphics[width=\linewidth]{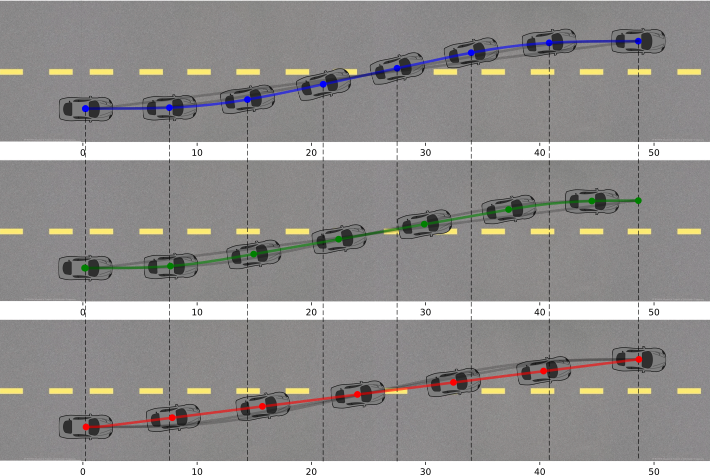}
    \caption{Example reference trajectories for a lane change maneuver given three minimization objectives: discomfort (top), time (bottom), and a mix between time and discomfort (mid). Initial and final speeds are fixed at 10 m/s, cars are drawn ever 0.75 seconds.}
    \label{PNWHL}
\end{figure}

The problem just described is an instance of infinite dimensional optimization in which a non-convex cost (discussed later) is minimized over two continuous functions: path and velocity profile. On one extreme, one could attempt to compute both functions simultaneously.  However, because of the nature of the problem, conventional techniques like convex optimization \cite{gb08}, or sequential quadratic programming, cannot be used without first modifying the problem.  On the other extreme, one could consider a simplified version of the problem by completely decoupling path and velocity: first minimizing cost over one function, then the other~\cite{kant1986toward}.  This latter approach does not consider the influence of the choice of path on the velocity profile (or vice-versa). For example, if a user strongly prefers short travel time to comfort, then an algorithm which first selects a cost-minimizing path, and then computes a velocity given that path would compute a trajectory with the shortest possible path, and highest feasible velocity.  However, it has been observed that such a trajectory is sub-optimal when attempting to minimize travel time.  This is because short paths typically feature higher values of curvature which require lower values of velocity to maintain safety.   

We propose in this paper a hybrid of these two approaches: we decouple the problem but repeatedly solve the path and velocity profiles in an iterative fashion.  In particular, we consider a simplification of the original optimization problem that limits the set of admissible paths to one in which individual elements can easily be distinguished from each other via a single parameter $\bar{\rho}\in\mathbb{R}_{>0}$.  This is done by considering only those paths whose second derivative of curvature is piece-wise constant, taking only values in $\{0, \pm \bar{\rho}\}$.  We also propose a modification of the techniques employed by \cite{biegler1984solution} to discretize the set of admissible velocity profiles into $n-1\geq 2$ way-points for longitudinal acceleration.  Thus, the two continuous functions, path and velocity profile, over which the our cost is minimized are replaced with $n$ constants.  We then repeatedly select paths by selecting values for $\bar{\rho}$, and compute the $n-1$ remaining parameters that minimize cost given the selected path.  We thus iteratively refine both path and velocity profile.  Our contributions are summarized in the next section.

\subsection{Contributions}
This paper contributes to the work of trajectory planning in the following ways:
\begin{itemize}
    \item Given a set of weights representing a trade off between comfort and travel time, we propose a method of simplifying the resulting infinite-dimensional non-convex optimization problem to one of finite dimension, allowing us to iteratively refine both the path and velocity. 
    \item To use the technique proposed here, we develop a method to compute paths between start and goal configurations whose second derivative of curvature is piece-wise constant taking values only in $\{0, \pm \bar{\rho}\}$ given a value of $\bar{\rho}\in\mathbb{R}_{>0}$.  
    \item Finally, we present a modification of the technique from  \cite{biegler1984solution} that allows us to compute a cost-minimizing velocity profile given a path.
\end{itemize}

\subsection{Related Work}
\subsubsection{Path Planning}
Path planning techniques can typically be divided into two categories: a path-fitting approach, and a curvature fitting approach. In the path fitting approach, a form of the path is fixed up to unknown parameters. These parameters are then computed numerically by solving optimization problems. Example path-forms include $G2-$splines~\cite{inproceedings2, inproceedings}, and Bezier curves~\cite{5233184, 5548085}. In the path fitting approach, smoothness and comfort of paths is typically enforced by minimizing the maximum curvature rate (the rate of change of curvature with respect to arc-length) over the trajectory. Minimizing the maximum curvature rate as opposed to simply bounding it may result in longer than necessary paths. 
\par
Unlike the path fitting approach, the curvature fitting approach assumes a form of the curvature profile over the path (as opposed to the path itself) up to unknown parameters. The unknown parameters are then computed by solving a two point boundary value problem between start and goal configurations. Early work in this approach includes Dubins' paths \cite{Dubins_1957}, which are the shortest paths between start and goal configurations if the curvature is bounded. 
\par
In~\cite{Kelly-2003-8714, 6225063}, the authors use a cubic polynomial curvature representation. However, the resulting path does not possess sufficient degrees of freedom to account for bounds on the curvature, resulting in potentially infeasible paths that violate the physical curvature constraints of the vehicle. 

Clothoid paths~\cite{1362698} address feasibility issues by placing constraints on both curvature and curvature rate over the path resulting in paths with piece-wise constant curvature rate. A \emph{clothoid}, or Euler Spiral, is a curve whose instantaneous curvature, $\ka$ is a linear function of the curve's arc-length, $s$. When the start and goal configurations are given as an $(x,y)$ pose, a heading $\theta$, and a curvature $\kappa$, several methods have been proposed to generate the shortest path from start to goal as a sequence of clothoid and straight line segments~\cite{shin1992path},~\cite{Gim2017},~\cite{8461201}. However, since the curvature of a clothoid varies linearly with arc length, the sharpness of the path (and thus the lateral jerk) is constant. Hence, a clothoid is a path that minimizes the maximum squared jerk rather than the IS jerk. 

Requiring that the curvature be twice differentiable, and placing bounds on curvature, curvature rate, and second derivative of curvature results in paths called $G3$ paths whose IS jerk (and not simply maximum squared jerk) can be minimized. Such paths are to clothoid ($G2$) paths, what clothoids were to Dubin's ($G1$) paths. Thus $G3$ paths are one step closer to paths with infinitely differentiable curvatures -- like spline and Bezier paths -- while minimizing arc-length in the presence of explicit restrictions on curvature, and curvature rate (like clothoid paths). In~\cite{oliveira2018trajectory}, the authors obtain $G3$ paths by concatenating cubic spline curves and straight lines. Though this approach results in $G3$ paths, the spline segments of the path, which are infinitely differentiable, may result in longer than necessary arc-lengths as is the case with regular spline paths. 

In~\cite{banzhaf2018g}, the authors introduce continuous curvature rate (CCR), and hybrid curvature rate (HCR) curves. These curves are $G3$ curves in which the derivative of curvature with respect to arc-length, $\sigma$, is a piece-wise linear function of the arc-length $s$, and the second derivative of curvature with respect to arc-length, $\rho$, is piece-wise constant. The authors of~\cite{banzhaf2018g} compute paths between start and goal states by combining CCR/HCR curves and straight lines. 

By placing bounds on $\sigma$ and $\rho$, the paths developed in \cite{banzhaf2018g} have the added benefit of bounding not only lateral/longitudinal jerk, but angular jerk (the rate of change of angular acceleration), as well as lateral/longitudinal snap (the time derivative of jerk). These benefits make HCR/CCR curves particularly attractive for use in the path planning phase of trajectory planning. The techniques developed in~\cite{banzhaf2018g} will be extended in this paper in four ways. First, the authors of~\cite{banzhaf2018g} focus on paths whose start and goal states have either 0 or maximum curvature. Though the symmetry resulting from this assumption greatly simplifies the mathematics of path planning, it limits the usefulness of the approach in producing a tracking trajectory. In particular, while tracking a reference trajectory, it may occur that the vehicle slips off of the planned path and requires a re-plan from a state whose curvature is not 0 or its maximum value. Therefore, the work herein relaxes the assumptions on the initial and final curvatures. Second, the authors of~\cite{banzhaf2018g} use search techniques to join HCR/CCR curves with straight lines to produce a path. In this paper, we present an algorithm to quickly determine such paths by reducing the path planning problem to a Dubin's-like sub-problem. Third, it is assumed in~\cite{banzhaf2018g}, that the maximum second derivative of curvature with respect to arc-length, $\rhomax$, along a path is known. Because $G3$ curves are categorized by a piece-wise constant functions $\rho(s)$, the value of $\rhomax$ dictates the slope -- and therefore the magnitude at any arc-length -- of the curvature rate.  Observe then, that $\rhomax$ should be functions of speed as, for example, a driver may turn the steering wheel very quickly in a parking lot, but not on a highway. In this paper, we compute $\rhomax$ by including it as a parameter denoted $\bar{\rho}$ in an optimization problem that seeks to minimize a trade off between travel time and passenger comfort.  Finally, it is assumed in~\cite{banzhaf2018g}, that the velocity of the vehicle is a positive or negative constant. While the work presented herein is limited to positive speeds, we develop a method by which a velocity profile can be computed.

\vspace{3mm}

\subsubsection{Velocity Planning}
Once a candidate path has been developed, a velocity profile to describe the motion of the vehicle along that path must be computed. This velocity profile should take into account a tradeoff between comfort and duration of travel. A common technique involves minimizing a cost function that is a weighted sum of undesirable features~\cite{levinson2011towards}. In~\cite{ziegler2014trajectory, ziegler2014making}, the authors integrate the weighted sum of the squared offset of the trajectory from the center of the road, the squared error in velocity from a desired profile, the squared acceleration, the squared jerk, and the squared yaw rate. A similar technique is employed in~\cite{li2015real} with the addition of a penalization on final arc-length. In the context of local trajectory planning, obstacles such as road boundaries, pedestrians, etc. are not considered. Therefore, in this work, we focus on developing a velocity profile and bound $\bar{\rho}$ that minimize a cost function similar to \cite{ziegler2014trajectory, ziegler2014making}, penalizing arc-length, acceleration, jerk, and yaw rate.

\section{Problem Statement}
We begin with a review of the optimization problem that motivated the development of HCR/CCR curves in~\cite{banzhaf2018g}. We will then augment the problem to account for comfort resulting in a new problem that is the focus of this paper. A note on notation: we use $(\cdot)'$ to denote differentiation with respect to arc-length $s$ along a path, while $\dot{(\cdot)}$ represents differentiation with respect to time $t$.
\subsection{Original Optimization Problem}
Let $\vect{p}=[x, y, \z, \ka, \sigma, v]^T$ denote a vector of states for a vehicle. Here, $(x,y)\in\mathbb{R}^2$ is the planar location of the  vehicle, $\z$ the heading, $\ka$ the curvature, $\sigma$ the curvature rate, and $v$ the velocity. In this paper, we refer to the states $x,y,\z, \ka, \sigma$ as \emph{path states}. We adopt the following common vehicle model for the path states, assuming forward motion~\cite{banzhaf2018g}
\begin{equation}
\label{dynamics}
    \begin{bmatrix}
    x'\\
    y'\\
    \theta'\\
    \kappa'\\
    \sigma'
    \end{bmatrix}=\begin{bmatrix}
    \cos(\theta)\\
    \sin(\theta)\\
    \kappa\\
    \sigma\\
    0
    \end{bmatrix}+\begin{bmatrix}
    0\\
    0\\
    0\\
    0\\
    \rho
    \end{bmatrix}.
\end{equation}
Note, $\sigma$ is the derivative of curvature with respect to arc-length (called the \emph{sharpness}, or \emph{curvature rate}), while $\rho$ represents the second derivative of curvature with respect to arc-length.  Observe that the model in~\eqref{dynamics} does not assume that $\rho$ is continuous. Thus model~\eqref{dynamics} can be used to describe $G3$ curves whose curvature rates are piece-wise linear functions of arc-length. The goal of path planning using $G3$ curves is to obtain curves of minimal final arc-length $s_f$, from start path states $(x_s, y_s, \z_s, \ka_s, \sigma_s)$ to a goal path states $(x_g, y_g, \z_g, \ka_g, \sigma_g)$, subject to boundary constraints:
\begin{equation}
\label{boundary}
    \begin{split}
        x(0)&=x_s, \  y(0)\ =y_s, \ \  \theta(0)\ =\theta_s, \ \kappa(0), \ =\kappa_s,\\
     x(s_f)&=x_g, \  y(s_f)=y_g, \  \theta(s_f)=\theta_g, \ \kappa(s_f)=\kappa_g,\\
     \sigma(0)&=\sigma(s_f)=0.
    \end{split}
\end{equation}
and physical constraints:
\begin{equation}
\label{bounds}
    \begin{split}
         \kappa(s)&\in[-\kmax, \kmax], \ \forall s\in[0, s_f], \\
     \sigma(s)&\in[-\sigmax, \sigmax], \ \forall s\in[0, s_f],\\
     \rho(s)&\in[-\rhomax, \rhomax], \ \forall s\in[0, s_f],
    \end{split}
\end{equation}
where $\kamax, \sigmax, \rhomax >0$ are known.  To summarize, if velocity is restricted to positive values, then the work of~\cite{banzhaf2018g} seeks to solve the following optimization problem (OP):
\begin{equation}
\label{OP}
\begin{split}
    \min_{\rho} \  &s_f\\
    s.t. \ \text{constraints} \  &\eqref{dynamics}, \eqref{boundary}, \eqref{bounds}
\end{split}
\end{equation}
Observe that at constant speed, minimizing arc-length as in \eqref{OP}, is equivalent to minimizing travel time.  
\subsection{Adding Comfort Constraints}
\label{CH2.2}
The constraints \eqref{bounds} involve the positive constants $\kamax$, $\sigmax$, $\rhomax$.  As in~\cite{1362698} and~\cite{banzhaf2018g} we assume that these values are known.  However, we assume that these values reflect maximum physical limits.  That is, $\kamax^{-1}$ is the minimum turning radius of the vehicle, $\sigmax$ is the maximum curvature rate given limitations on the vehicle's steering actuator, etc. Observe that by \eqref{dynamics}, any function $\rho$ uniquely defines path, denoted $P_{\rho}$, of path states from a fixed start. For the purposes of this paper, we assume known initial and final velocities $v_s, v_g$, (resp.) and zero initial and final acceleration. This ensures smooth concatenation of trajectories, and is common in trajectory generation \cite{4650924}. Under these assumption, a continuous velocity profile is uniquely determined by $v''$, denoted $\beta$.

The functions $\rho$ and $\beta$ will serve as the variables of an OP that balances travel time and discomfort. Any choice of $(\rho, \beta)$ uniquely defines a trajectory given by the tuple $(P_{\rho}, v_{\beta})$ where $P_{\rho}$ is a feasible solution to \eqref{OP}, and $v_{\beta}$ is the velocity profile associated with $\beta$. If $\rho$ and $\beta$ are parameterized by arc-length, we write $(P_{\rho}(s), v_{\beta}(s))$, whereas we write $(P_{\rho}(t), v_{\beta}(t))$ if $\rho$ and $\beta$ are parameterized by time.

Using a technique similar to~\cite{levinson2011towards,ziegler2014trajectory, ziegler2014making,li2015real}, we penalize a trade off between travel time and comfort:
\begin{equation}
    \label{CostFunc}
    \begin{split}
    C(P_{\rho}(t),&v_{\beta}(t))=\int_0^{t_f}(w_{a}C_{a}+w_{\mathcal{J}}C_{\mathcal{J}} + w_yC_y+w_t) dt,\\
    \text{where} \ C_{a} &= |a_N(t)|^2 + |a_T(t)|^2,\\
    C_{\mathcal{J}} &= \mathcal{J}_N(t)^2 + \mathcal{J}_T(t)^2,
    C_y = \left(\bigg|\frac{d \z(t)}{dt}\bigg| \right)^2.
    \end{split}
\end{equation}
Here, $C_a, C_{\mathcal{J}}, C_y$ represent the squared magnitude of the acceleration $\vect{a}$ (expressed using normal and tangential components $a_N, a_T$), the squared magnitude of jerk (expressed using normal and tangential components $\mathcal{J}_N, \mathcal{J}_T$), and the squared magnitude of yaw rate for the trajectory $(P_{\rho}(t),v_{\beta}(t))$. These costs are weighted with constants $w_a, w_{\mathcal{J}}, w_y$ representing the relative importance of each feature to a user. We refer to the terms $\int_{0}^{t_f}w_mC_m, m\in\{a, \J, y, t\}$ as the integral squared (IS) acceleration, IS jerk, IS yaw, and time cost, respectively.  

As will be discussed later, we employ a variant of sequential quadratic programming (SQP) to optimize the velocity profile $v$, for a given path. Therefore, we require that the limits of integration in the definition of the cost be fixed~\cite{biegler1984solution}. Noting that the final arc-length of a given path $P_{\rho}(t)$ is fixed, we re-parameterize the cost \eqref{CostFunc}  in terms of arc-length $s$. Letting $\vect{n}, \vect{\tau}$ represent the unit normal and unit tangent vectors, respectively, we observe that the acceleration vector $\vect{a}$, and the jerk vector $\vect{\J}$ are given by
\begin{equation}
\label{AJ}
\begin{split}
     \vect{a} &=a_N\vect{n} +  a_T\vect{\tau} =|\ka|v^2\vect{n} + a_T\vect{\tau} \\
     \vect{\J}&= \dot{\vect{a}} = (3va_T|\ka| + v^3\bar{\sigma})\vect{n} + (\dot{a}_T - \ka^2v^3)\vect{\tau},
\end{split}
\end{equation}
where $\bar{\sigma}=(|\ka|)'$. The expression for $\vect{\J}$ was obtained by differentiating $\vect{a}$ with respect to time (using the Frenet-Serret formula to integrate the normal and tangent vectors, see \cite{schot1978jerk}) and observing $\dot{m}=m' (ds/dt)=m' v$ for any path state $m$. Finally, letting $\alpha = v', \beta = \alpha ', b=\dot{a}_T$, we observe
\begin{equation}
    \label{alphabeta}
    a_T = \alpha v, \ \dot{a}_T=b = v(\beta v + \alpha^2).
\end{equation}
Combining \eqref{CostFunc}, \eqref{AJ}, and \eqref{alphabeta}, and integrating with respect to $s$ instead of $t$ yields:
\begin{equation}
\label{NEWCOST}
    \begin{split}
C(P_{\rho}(s),v_{\beta}(s))&=\int_0^{s_f}w_a\tilde{C}_a+w_{\J}\tilde{C}_{\J} + w_y\tilde{C}_y+w_t\tilde{C}_t ds,\\
        \tilde{C}_a &= v^3|\ka|^2+\alpha^2v,\\
        \tilde{C}_{\J} &= v^3(3\alpha|\ka| + \bar{\sigma} v)^2 + v(\beta v + \alpha^2 -\ka^2v^2)^2,\\
        \tilde{C}_y &= |\ka|^2 v,\\
        \tilde{C}_t &= v^{-1}.
    \end{split}
\end{equation}
We now use this cost together with the constraints developed earlier to state the new OP that is the focus of this paper:
\begin{equation}
\label{NEWOP}
\begin{split}
    \min_{\rho(s), \beta(s)} \  &C(P_{\rho}(s),v_{\beta}(s))\\
    s.t. \ &\text{constraints }  \eqref{dynamics},  \eqref{boundary}, \eqref{bounds}, \eqref{alphabeta}\\
    & \begin{bmatrix}
    v'_{\beta}(s) & \alpha'(s)
    \end{bmatrix}^T=\begin{bmatrix}
    \alpha(s) & \beta(s)
    \end{bmatrix}^T\\
     &v_{\beta}(0)=v_s, \ v_{\beta}(s_f)=v_f, \ \alpha(0)=\alpha(s_f)=0,\\
     &v_{\beta}(s)\in(0, \vmax], a(s)\in[-a_{\text{max}}, a_{\text{max}}],\\
     &b(s)\in[-b_{\text{max}}, b_{\text{max}}], \ \forall s\in[0, s_f].
\end{split}
\end{equation}
Let $R\times B$ be the set of all $(\rho(s), \beta(s))$ whose associated trajectories are feasible solutions to \eqref{NEWOP}. In the next section, we describe our solution approach in detail.

\section{Approach}
The optimization problem in \eqref{NEWOP} is an instance of infinite dimensional, non-convex optimization. The non-convexity of \eqref{NEWOP} is due to the cost $C_{\J}$ (and $\tilde{C}_{\J}$). Furthermore, the non-holonomic constraints \eqref{dynamics} require the evaluation cubic Fresnel integrals for which there is no closed form solution. For these reasons, we propose an approach to that simplifies \eqref{NEWOP}. 
In this section we present the high-level idea behind our technique, beginning with a motivating Theorem. 

\begin{theorem}[$G3$ Paths as Optimal]
\label{G3asOpt}
If $(\rho^*, \beta^*)$ is a solution to the OP \eqref{NEWOP}, then $\rho^*$ is piece-wise constant. That is, the optimal path $P_{\rho^*}$ is a $G3$ path. 
\end{theorem}
The result of this Theorem follows directly from the observation that $\rho$ does not appear explicitly in the cost of \eqref{NEWOP}, and appears linearly in the constraints. Thus the Hamiltonian is linear in $\rho$, and the result follows from Pontryagin's Minimum Principle~\cite[Chapter~12]{kamien2012dynamic}.

Theorem \ref{G3asOpt} implies that a path that solves \eqref{NEWOP} is $G3$.  However, there are many possible $G3$ paths connecting start and goal path states.  Therefore, the first part of our technique is to simplify the OP \eqref{NEWOP} by considering only those $G3$ curves $P_{\hat{\rho}}$ such that there exists a constant $\bar{\rho}\leq \rhomax$ for which $P_{\hat{\rho}}$ solves \eqref{OP} when $\rhomax$ is replaced with $\bar{\rho}$. That is, we approximate a solution to \eqref{NEWOP} by solving
\be
\label{OPAPPROX}
\begin{split}
\min_{\bar{\rho}\leq \rhomax} \left( \min_{\beta(s)\in B} C(P_{\hat{\rho}}(s), v_{\beta}(s))\right),\\
s.t., \hat{\rho}(s)\in R, \ \text{solves \eqref{OP} for } \rhomax=\bar{\rho}.
\end{split}
\ee
In words, \eqref{OPAPPROX} approximates the OP in \eqref{NEWOP} by replacing the continuous function $\rho(s)$ with a constant $\bar{\rho}$. The path associated with $\bar{\rho}$ is a shortest $G3$ path $P_{\hat{\rho}}(s)$ such that $\hat{\rho}(s)$ is bounded in magnitude by $\bar{\rho}$.  From the results in \cite{banzhaf2018g}, we observe that $\hat{\rho}$ is a function taking values only in $\{\pm \bar{\rho}, 0\}$. This approach has two major advantages: first, we have replaced the continuous function $\rho(s)$ in \eqref{NEWOP} with constant $\bar{\rho}$ while still maintaining the $G3$-form of an optimal path.  Second, by tuning $\bar{\rho}$, we can still produce $G3$ curves with both sharp (typically favored by users who value short travel times) and gradual curvature functions (favored by users who value comfortable trajectories).  

The second part of our technique involves simplifying \eqref{OPAPPROX} yet further by replacing the continuous function $\beta(s)$ with a sequence of way-points for longitudinal acceleration. Similar to \cite{biegler1984solution}, we discretize the arc-length along the path $P_{\hat{\rho}}$ into $n-1\in\mathbb{N}_{\geq 2}$ fixed points $\{s_2,...,s_n\}$. We then replace the continuous function $\beta(s)$ with $n - 1$ constants $\alpha(s_i), i=2,\dots,n$ representing longitudinal acceleration at arc-length $s_i$. These acceleration way-points can then be used to produce a continuous function $\beta(s)$ of degree $n-1$ whose derivative $\alpha(s)$ takes values $\alpha(s_i)$ at arc-length $s_i$. In this work, we use $n=12$ which works well in practice.

Thus far, our approach has simplified \eqref{NEWOP} by replacing the two continuous functions $\rho, \beta$ with $n$ constants $\bar{\rho}, \alpha(s_i), i=2,\dots,n$.  The final stage of our technique is to compute the cost-minimizing values of these constants using gradient-decent to iteratively refine values of $\bar{\rho}$, and SQP to select $\alpha(s_i), i=2,\dots,n$ for each choice of $\bar{\rho}$.  In order to do this, we require two sub-techniques:  The first solves \eqref{OP} for any given values $\sigmax, \kamax,$ and $\rhomax=\bar{\rho}$ (Section IV). The second computes the optimal way-points $\alpha(s_i)$ for any fixed path (Section V).

\section{Computing $G3$ Paths}
\label{PathSec}
In this section, we describe how to compute $G3$ paths that solve \eqref{OP} for known values of $\rhomax$.  The function $\rho(s)$ that solves \eqref{OP} is piece-wise constant, taking values only in $\{0, \pm \rhomax\}$ (see \cite{banzhaf2018g}), a direct result of Pontryagin's Minimum Principle~\cite[Chapter~12]{kamien2012dynamic}. We begin with an investigation of single $G3$ curves, following closely the work presented in \cite{banzhaf2018g}. We then present a technique to connect $G3$ curves to form $G3$ paths.

\subsection{Single $G3$ Curves}
This section uses the definitions and notation presented in~\cite{banzhaf2018g} to outline the general form of a $G3$ curve. An example $G3$ curve performing a left-hand turn is presented in Figure~\ref{HCRcurve}. The maximum curvature of this maneuver is $\ktop$ where $|\ktop|\leq \kamax$. The curve begins at a path state $\vect{p}_s=(x_s, y_s, \z_s, \ka_s, \sigma_s=0)$ at an arc-length of $s=0$ along the path. From $s=0$ to $s=s_1$, the second derivative of curvature $\rho$, is set to its maximum value $\rhomax$. Thus the curvature rate $\sigma$ is given by the function $\sigma(s)=\rhomax\cdot s$. The functions $\ka(s), \z(s)$ are therefore quadratic and cubic (resp.), and can be calculated by~\eqref{dynamics}. 

At $s=s_1$, the curvature rate has reached its maximum allowable value $\sigmax$, and can go no higher, thus the function $\rho$ is set to 0, and $\sigma(s)=\sigmax$. As a result, for $s_1\leq s\leq s_2$, the function $\ka(s), \z(s)$ are linear and quadratic (resp.). At $s=s_2$, the value of $\rho$ is set to $-\rhomax$, and the curvature rate decreases from $\sigmax$, allowing the curvature to reach a value $\ktop$ at $s=s_3$. At $s=s_3$, the path state is given by $\vect{p}(s_3)$. From $s=s_3$ to $s=\Delta$, the curvature of the curve is held constant at $\ktop$. Thus the curve remains at a constant distance $\ktop^{-1}$ from its center of curvature $\vect{x}_c$ for all $s_3\leq s\leq\Delta$. The circle $\Omega_{\text{I}}$ is the circle centered at $\vect{x}_c$ with radius $\ktop^{-1}$. 
 \begin{figure}[t]
    \centering
 \includegraphics[width =0.9 \linewidth]{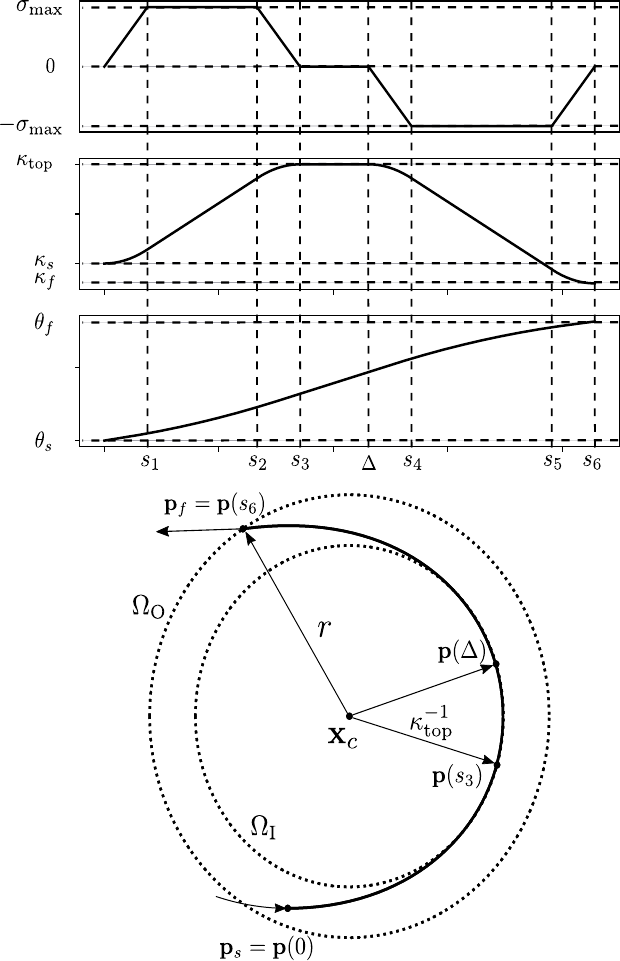}
    \caption{A basic $G3$ curve. \textbf{Top}: The functions $\sigma(s)$ (top), $\ka(s)$ (mid), and $\z(s)$ (bottom). \textbf{Bottom}: The resulting curve in the $x,y$ plane from start configuration $\vect{p}_s$ to final configuration $\vect{p}_f$. Image also appears in \cite{banzhaf2018g}}
    \label{HCRcurve}
\end{figure}  
\label{CurvePrelims}
At $s=\Delta$, the value of $\rho$ is set to $-\rhomax$, and the curvature rate follows the linear function $\sigma(s)=-\rhomax(s-\Delta)$ until $s=s_4$ when $\sigma(s)=-\sigmax$. Upon reaching its minimum allowable value, $\sigma(s)$ remains constant at $-\sigmax$ for $s_4\leq s\leq s_5$. At $s=s_5$, the value of $\rho$ is set to $\rhomax$, allowing the curvature to descend to its final value $\kappa_f$ at $s=s_6$. When $s=s_6$, the path state is given by $\vect{p}(s_6)$ which lies a distance of $r$ away from $\vect{x}_c$. The circle $\Omega_{\text{O}}$ is centered at $\vect{x}_c$ with radius $r$. We use a subscript $f$ to represent a path state at the end of a $G3$ curve. This is to differentiate these states from ones with a subscript $g$ which represents the goal path states at the end of a $G3$ path (the concatenation of one or more curves with straight lines).

To perform a right-hand turn, a similar analysis to that above is employed. The resulting function $\sigma(s)$ will appear as a mirror images about the horizontal axis to that presented in Figure~\ref{HCRcurve}. We now illustrate how the switching arc-lengths $s_i, i=1,\dots,6$ of the function $\rho(s)$ may be calculated. While similar, the switching arc-lengths presented in~\cite{banzhaf2018g} assume that $\ka_s, \ka_f\in\{0, \pm \kamax\}$. Because this is not an assumption that we make, it is necessary to re-derive these values.

Given the parameters $\rhomax,\sigmax,\ktop, \Delta$, the values of $s_i, \ i=1,\dots,6$ such that $\ka(0)=\ka_s, \ka(s_3)=\ktop, \ka(s_6)=\ka_f$, are given by:
\small
\begin{equation}
\label{deltas}
\begin{aligned}[c]
    s_1 &= \begin{cases}
    \frac{\sigmax}{\rhomax}, \ &\text{if } |\ktop -\ka_s|>\frac{(\sigmax)^2}{\rhomax}\\
     \sqrt{\frac{|\ktop -\ka_s|}{\rhomax}}, \ &\text{otherwise}
    \end{cases}\\
    s_2 &= \begin{cases}
    \frac{|\ktop - \ka_s|}{\sigmax},  \ &\text{if } |\ktop -\ka_s|>\frac{(\sigmax)^2}{\rhomax}\\
    s_1, \ &\text{otherwise}
    \end{cases}\\
    s_3 &=s_1 + s_2,\\
    s_4 &=\Delta + \begin{cases}
    \frac{\sigmax}{\rhomax}, \ &\text{if } |\ktop -\ka_f|>\frac{(\sigmax)^2}{\rhomax}\\
    \sqrt{\frac{|\ktop -\ka_f|}{\rhomax}}, \ &\text{otherwise}
    \end{cases}\\
    s_5 &= \Delta + \begin{cases}
    \frac{|\ktop - \ka_f|}{\sigmax},  \ &\text{if } |\ktop -\ka_f|>\frac{(\sigmax)^2}{\rhomax}\\
    s_4, \ &\text{otherwise}
    \end{cases}\\
    s_6 &=s_4 + s_5-\Delta.
\end{aligned}
\end{equation}
\normalsize
Also similar to~\cite{banzhaf2018g}, the curvature function of a $G3$ curve with $\rho(s)$ switching arc-lengths given by~\eqref{deltas} and with with initial curvature $\ka_s$, is given by
\begin{equation}
\label{ks}
    \ka(s)=\begin{cases}
    \ka_s \pm 0.5\rhomax s^2, \ s\in[0, s_1]\\
    \ka(\delta_1) \pm \rhomax s_1(s-s_1), \ s\in[s_1, s_2]\\
    \ka(s_2) \pm 0.5\rhomax(s-s_2)(s_2+2s_1-s), \ s\in[s_2, s_3]\\
    \ktop, \ s\in[s_3, \Delta]\\
    \ktop \mp 0.5\eta\rhomax(s-\Delta)^2, \ s\in[\Delta, s_4]\\
    \ka(s_4) \mp\eta \rhomax(s_4-\Delta)(s-s_4), \ s\in[s_4, s_5]\\
    \ka(s_5)\mp0.5\eta \rhomax(s-s_5)(s_5 -2\Delta+ 2s_4-s).
    \end{cases}
\end{equation}
Here, the top sign of each "$\pm, \mp$" is used if $\ktop\geq \ka_s$, while the bottom sign is used otherwise and $\eta=1$ if $\sigma(s_1)\sigma(s_4)<0$ and $\eta =-1$ otherwise. The value $\eta=1$ is used in circumstances where the curvature function increases to $\ktop$ and then decreases to $\ka_f$, or decreases to $\ktop$ and then increases to $\ka_f$. On the other hand, a value $\eta=-1$ is used when the curvature function is either monotonically increasing or decreasing over the entire curve.

 By~\eqref{dynamics}, the path state vector $\vect{p}$ along a $G3$ curve whose curvature is given by~\eqref{ks}, can be parameterized in terms of arc-length as: 
 \begin{equation}
     \label{configs}
     \vect{p}(s)=\begin{bmatrix}
     x(s)\\
     y(s)\\
     \z(s)\\
     \ka(s)
     \end{bmatrix} = \begin{bmatrix}
     x_s+\int_0^s\cos(\z(\tau)) d\tau\\
     y_s+\int_0^s\sin(\z(\tau)) d\tau\\
     z_0+z_1s+z_2s^2+z_3s^3\\
     \ka(s)
     \end{bmatrix},
 \end{equation}
 where $z_0, z_1, z_2, z_3$ are obtained by integrating $\ka(s)$ in~\eqref{ks}. As Figure~\ref{HCRcurve} implies, the point $\vect{x}_c$, and radius $r$, are given by
 \begin{equation}
\label{centers}
\vect{x}_c=\begin{bmatrix}
x_c\\
y_c
\end{bmatrix}=
\begin{bmatrix}
x(s_3)-\ktop^{-1}\sin(\z(s_3))\\
y(s_3)+\ktop^{-1}\cos(\z(s_3))
\end{bmatrix},
\end{equation}
\begin{equation}
\label{router}
    r=||(x(s_6), y(s_6)) - (x_c, y_c)||.
\end{equation}
Assuming that $\rhomax, \sigmax$ are known, we observe from the preceding arguments that the initial path states $\vect{p}_s$, the top and final curvatures $\ktop, \ka_f$ (resp.), and the arc-length $\Delta$ are enough to uniquely define a $G3$ curve. Thus we denote a $G3$ curve $G(\vect{p}_s, \ktop, \ka_f, \Delta)$ as the set of path states $\vect{p}(s)$ given by~\eqref{configs} from $s=0$ to $s=s_6$. A $G3$ path $P_{\rho}$, solving $\eqref{OP}$ is therefore a concatenation of such curves and straight lines where $\rho(s)$ is a function that takes only values in $\{\pm \rhomax, 0\}$.

As~\eqref{deltas} implies, the final arc-length of this curve is given by $s_6$. Let $G_i^{s_j}\varD$, be the value of path state $i$ at arc-length $s_j$. In the next section, we present our technique for solving \eqref{OP} by concatenating $G3$ curves and straight lines.

\subsection{Connecting $G3$ Curves}
This section proposes a technique to connect $G3$ curves using a straight line  via a reduction of the $G3$ curve path planning problem \eqref{OP} to a Dubin's-like path planning problem with two different minimal turning radii. The latter problem can be solved quickly by calculating the common tangents of two circles with different radii. For fixed values $\vect{p}_s, \ktop, \ka_f=0$, consider the set of curves
\be
\label{setofcurves}
\mathcal{G}=\{G(\vect{p}_s, \ktop, \ka_f=0, \Delta) \ : \ \Delta \geq s_3\}.
\ee
Observe that members of $\mathcal{G}$ are $G3$ curves with final curvature 0.  This is to facilitate connecting $G3$ curves using straight lines while preserving curvature continuity. The assumption that $\sigma(s_6)=0$ ensures that $G3$ curves can be connected with straight lines while preserving continuous curvature rate. 

For each pair $(\vect{p}_s, \ktop)$ we can construct a set $\mathcal{G}$ in \eqref{setofcurves}, containing the curve $G\varokmid$ as an element. Upon reaching curvature $\ktop$, the curvature of this curve immediately increases or decreases to $\ka_f$.  The curvature profile of such a curve can be found in Figure \ref{IncDelta}.

For fixed values $\vect{p}_s, \ktop, \ka_f=0$, the value of $s_3$ given in \eqref{deltas} is independent of $\Delta$, implying that every curve in $\mathcal{G}$ shares the same switching arc-length $s_3$. The same holds for the center $\vect{x}_c$ given in \eqref{centers}. Therefore, the values $\vect{p}_s, \ktop, \ka_f=0$, which are shared by all curves in $\mathcal{G}$, are sufficient to determine $s_3$ and $\vect{x}_c$. Finally, note that each curve in $\mathcal{G}$ is coincident with every other curve in $\mathcal{G}$ for all $s\leq s_3$. This phenomena can be see in Figure \ref{HCRcurve}:  the duration $\Delta-s_3$ that a curve spends at constant curvature $\ktop$ does not affect the portion of the curve between $\vect{p}_s$ and $\vect{p}(s_3)$.

\begin{figure}[t]
    \centering
    \includegraphics[width=0.8\linewidth]{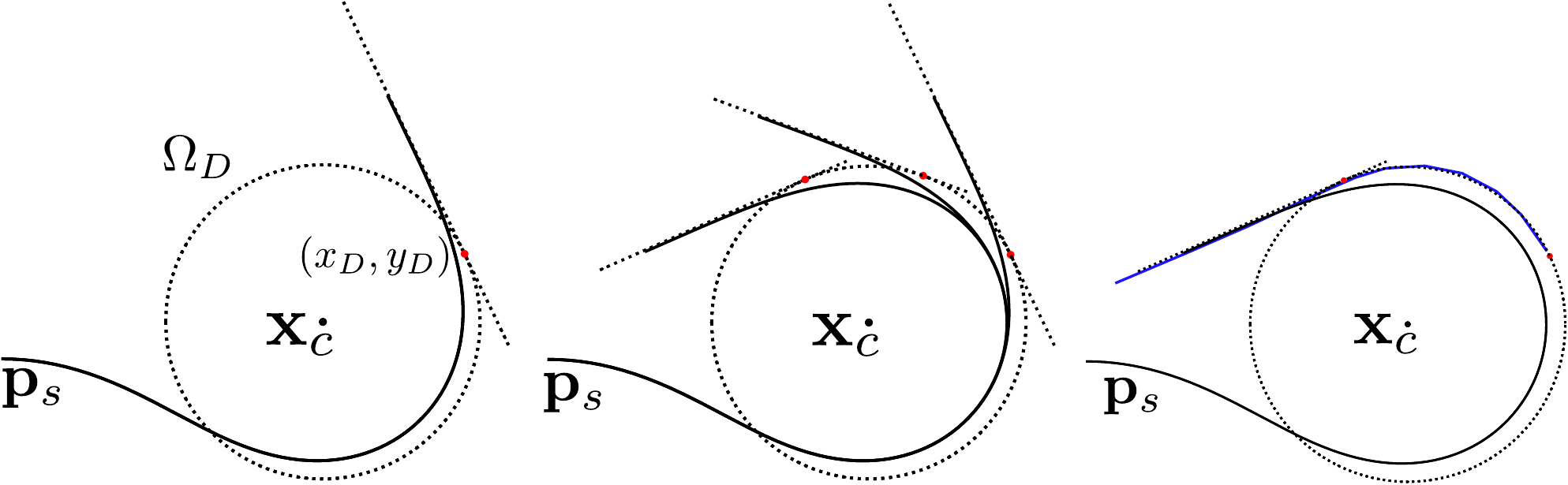}
    \caption{(Left) Representative circle $\Omega_D$, and point $(x_D, y_D)$. (Mid) Illustration of Theorem \ref{curve to circle}. (Right) Illustration of Step 5, with partial Dubin's path $\mathcal{D}$ (blue) and corresponding $G3$ curve $\hat{G}_i$ (black).}
    \label{Iris1}
\end{figure} 

The following Lemma illustrates the relationship between $\Delta$ and $G^{s_6}_{\z}\varD$. 

\begin{lemma}[Final Heading]
\label{LemmDelta}
For each set of curves $\mathcal{G}$ defined in \ref{setofcurves}, there is a unique solution $\Delta$ to the equation $G^{s_6}_{\z}\varD=\z_f$ such that $G\varD$ has minimum arc-length over $\Delta$:
\be
\label{Delta}
\Delta = s_3 + (\z_g-G_{\z}^{s_6}\varmid)\ktop^{-1}.
\ee
where $s_3$ is given by~\eqref{deltas} for the curve $G\varD$. 
\end{lemma}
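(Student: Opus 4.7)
The plan is to decompose the final heading of a curve in $\mathcal{P}$ into three arc-length pieces, corresponding to the intervals $[0,s_3]$, $[s_3,\Delta]$, and $[\Delta, s_6]$, and to show that exactly one piece depends on $\Delta$, and does so linearly. Since $\z'(s)=\ka(s)$ by \eqref{dynamics}, we have
\begin{equation*}
P^{s_6}_{\z}\varD = \z_s + \int_0^{s_3}\ka(s)\,ds + \int_{s_3}^{\Delta}\ka(s)\,ds + \int_{\Delta}^{s_6}\ka(s)\,ds.
\end{equation*}
The first integral is determined entirely by $\vect{p}_s, \ktop, \sigmax, \rhomax$ via \eqref{ks} and \eqref{deltas}, and is therefore $\Delta$-independent; this matches the remark before the Lemma that all curves in $\mathcal{P}$ coincide on $[0,s_3]$. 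The second integral equals $\ktop(\Delta-s_3)$ because $\ka\equiv\ktop$ on that interval.

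The crux is the third integral. Inspection of \eqref{deltas} shows that, for fixed $\ktop,\sigmax,\rhomax$ and $\ka_g=0$, the quantities $s_4-\Delta$, $s_5-\Delta$, and $s_6-\Delta = (s_4-\Delta)+(s_5-\Delta)$ are all $\Delta$-independent. Further, the last three branches of \eqref{ks} express $\ka(s)$ on $[\Delta,s_6]$ as polynomials in $s-\Delta$, $s-s_4$, and $s-s_5$, each of which reduces under $u=s-\Delta$ to an expression in $u$ and $\Delta$-independent constants. The substitution $u=s-\Delta$ therefore turns $\int_\Delta^{s_6}\ka(s)\,ds$ into an integral over $[0, s_6-\Delta]$ whose integrand and limits depend only on $\ktop,\sigmax,\rhomax$. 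Combining the three pieces, and using the $\Delta=s_3$ case to absorb the first and third into a single constant, yields
\begin{equation*}
P^{s_6}_{\z}\varD = P^{s_6}_{\z}\varokmid + \ktop(\Delta-s_3).
\end{equation*}
Setting the left side equal to $\z_g$ and solving for $\Delta$ (assuming $\ktop\neq 0$, which is necessary for any nontrivial turn) reproduces \eqref{Delta}.

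For uniqueness and arc-length minimality, the displayed equation is linear in $\Delta$ with nonzero slope $\ktop$, so it admits a unique real solution for any fixed $\z_g\in\mathbb{R}$. If $\z_g$ is interpreted modulo $2\pi$ then the shifts $\Delta\mapsto \Delta+2\pi/|\ktop|$ also meet the heading constraint; however, by \eqref{deltas} we have $s_6=s_4+s_5-\Delta=\Delta+c$ for a constant $c=(s_4-\Delta)+(s_5-\Delta)$ independent of $\Delta$, so $s_6$ is strictly increasing in $\Delta$. The smallest admissible $\Delta$ --- the one given by \eqref{Delta}, provided it satisfies $\Delta\geq s_3$ --- therefore simultaneously satisfies the heading constraint and minimizes arc-length. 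The main obstacle will be cleanly verifying the $\Delta$-independence of $\int_\Delta^{s_6}\ka(s)\,ds$, since it requires tracking how every branch boundary and coefficient in \eqref{ks} shifts with $\Delta$; once this is established, the remainder is a short algebraic rearrangement.
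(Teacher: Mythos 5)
Your proposal is correct and follows essentially the same route as the paper: decompose $\int_0^{s_6}\ka\,ds$ over $[0,s_3]$, $[s_3,\Delta]$, $[\Delta,s_6]$, note the first and third pieces are $\Delta$-independent (a fact the paper asserts "though the proof is omitted for brevity" and you justify via the shift $u=s-\Delta$), and solve the resulting affine equation $\z_g=K+\ktop(\Delta-s_3)$ with $K=P_{\z}^{s_6}\varokmid$. Your added remarks on the $2\pi$-periodicity of $\z_g$ and the monotonicity of $s_6$ in $\Delta$ go slightly beyond the paper's written argument for minimality, but do not change the approach.
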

The proof of Lemma \ref{LemmDelta} can be found in the Appendix. Lemma~\ref{LemmDelta} implies that the final headings of elements in $\mathcal{G}$ varies linearly with their values of $\Delta$. This Lemma motivates the following key Theorem that will be heavily leveraged later. 
\begin{theorem}[Reducing Theorem]
\label{curve to circle}
Given a set of curves $\mathcal{G}$ in \eqref{setofcurves}, let $x_f, y_f, \z_f$ denote the final $x,y$, and $\z$ values of any curve $G(\vect{p}_s, \ktop, 0, \Delta)\in\mathcal{G}$.  Also, let $m$ denote the slope of a line induced by $\z_f$, i.e., $m=\tan{\z_f}$.  The shortest distance from the point $\vect{x}_c$ defined in \eqref{centers} to the line passing through $(x_f, y_f)$ whose slope is $m$ is a constant with respect to $\Delta$.
\end{theorem}

The proof of Theorem \ref{curve to circle} can be found in the Appendix. The result of Theorem \ref{curve to circle} is shown in Figure \ref{Iris1} (Mid). In this figure, the red dots correspond to the points on the lines containing the final path states of each curve in $\mathcal{G}$ that are closest to $\vect{x}_c$. Observe that they lie on a circle centered at $\vect{x}_c$. This theorem has the following major consequence that allows us to reduce a $G3$ path planning problem to a Dubin's-like path planning problem if $\vect{p}_s, \vect{p}_g, \ktop, \sigmax, \rhomax$ are fixed, and only $\Delta$ may vary. This interim result is the foundation of our proposed $G3$ path planning technique. We begin with a definition.

\begin{definition}[Representative Circle]
Given a set of curves $\mathcal{G}$ in \eqref{setofcurves}, and using the same notation as Theorem \ref{curve to circle}, the \emph{representative circle} $\Omega_D$ of a curve in $\mathcal{G}$, is the circle centered at $\vect{x_c}=(x_c, y_c)$ containing the point $(x_D, y_D)$ where
\begin{equation}
\label{xDyD}
    \begin{split}
        x_D =& \begin{cases} \frac{(mx_f - y_f+m^{-1}x_c+y_c)}{m + m^{-1}}, \ &\text{if } \z_f\neq \pi/2\\
        x_f, \ &\text{otherwise}
        \end{cases}
        \\
        y_D =&
        \begin{cases}
        -m^{-1}(x_D - x_c)+y_c, \ &\text{if } \z_f\neq \pi/2\\
        y_c \ &\text{otherwise}.
        \end{cases}
    \end{split}
\end{equation}
That is, for the straight line containing $(x_f, y_f)$ whose slope is $\tan \z_f$, the representative circle is the circle centered at $\vect{x}_c$ containing the point on the line closest to $\vect{x}_c$.  Observe that Theorem \ref{curve to circle} implies that all curves in a set $\mathcal{G}$ have coincident representative circles. An example representative circle, and corresponding points $(x_D, y_D)$ is show in Figure~\ref{Iris1} (Left). In this figure, the black curve is $G\varokmid$.  Theorem \ref{curve to circle} implies that every $G3$ curve in a given set $\mathcal{G}$ in \eqref{setofcurves} has the same representative circle.  This fact is leveraged in the following section.
\end{definition}
\subsection{Connecting $G3$ curves with a Straight Line}
 We are now prepared to present our algorithm for computing $G3$ paths that solve \eqref{OP} given a known value of $\rhomax$. The high-level idea is to begin by constructing two $G3$ curves originating from both start configuration and goal configuration (with reverse orientation), and then to connect these curves using either a straight line or another curve.  For simplicity, let $\vect{p}_1=\vect{p_s}, \vect{p}_2=\vect{p}_g$, and let $\vect{p}_3$ denote $\vect{p}_g$ with reverse orientation. That is, $\vect{p}_3=(x_g, y_g, \z_g+\pi, -\ka_g)$.
\begin{description}[style=unboxed,leftmargin=0cm]
\vspace{3mm}
\item[Step 1:] For each of the possible start and goal curve orientations: (R,L), (R,R), (L,R), (L, L) where R is a right-hand turn and L a left, perform the following steps, and then discard all but the choice with minimum final arc-length.
\vspace{3mm}
\item[Step 2:]  Let $G_i=G(\vect{p}_i, |{\ktop}_i|=\kamax, \ka_{f_i}=0, \Delta_i=s_{3_i}),$ $i=1,3$ be two curves originating from $\vect{p}_1, \vect{p}_3$, respectively, whose top curvature reaches its maximum allowable magnitude $\kamax$ (with sign given by the chosen orientation), and where $s_{3_i}$ is given by \eqref{deltas}. This step is illustrated by the black curves in Figure \ref{FirstFewSteps} (Left) for orientation (R,L).  Let $\mathcal{G}_i$ denote the two sets of curves from \eqref{setofcurves} with $G_i\in\mathcal{G}_i$.
\vspace{3mm}
\item[Step 3:] Compute the points $(x_{D_i}, y_{D_i}), i=1,3$ for curves $G_i$ on the representative circles of each curve. This step is illustrated in Figure \ref{FirstFewSteps} (Left). Here, the dotted circles represent the representative circles of each curve, and the red dots on these circles represent the points $(x_{D_i}, y_{D_i}), i=1,3$.
\vspace{3mm}
\item[Step 4:] Compute the Dubin's-type path $\mathcal{D}$ that consists of two curves and a straight line connecting $(x_{D_i}, y_{D_i}), i=1,3$ with minimum turning radius on each curve $r_{D_i}=||(x_{D_i}, y_{D_i})-(x_{c_i}, y_{c_i})||$. This step is illustrated in Figure \ref{FirstFewSteps} (mid). Note that curves $G_i, i=1,3$ may have different values of $r_{D_i}$, resulting in a Dubin's problem with different minimum turning radii. This may not have a solution of the form described here, which will be addressed this in the next section.
\vspace{3mm}
\item[Step 5:] Compute the angle $\z_f$ inscribed by the straight portion of $\mathcal{D}$ and the horizontal axis.  Compute the value of $\hat{\Delta}_i$ from \eqref{Delta} such that the curves $\hat{G}_i=G(\vect{p}_i, {\ktop}_i, \ka_{f_i}, \hat{\Delta}_i)\in\mathcal{G}_i$ $i=1,3$ have final headings $\z_f, \z_f+\pi$, respectively.  By Theorem \ref{curve to circle}, curves $G, \hat{G}_i$ have coincident representative circles which, by construction, have radii $r_{D_i}$ the minimum turning radii of $\mathcal{D}$.  Therefore, the final states $(x_{f_i}, y_{f_i}, \z_{f_i}, \ka_{i}=0,\sigma_{f_i}=0)$ of the curves $\hat{G}_i$ are on the path $\mathcal{D}$. This is illustrated in Figure \ref{Iris1} (Right).  Connect the curves $\hat{G}_i=G(\vect{p}_i, {\ktop}_i, \ka_{f_i}=0, \Delta_i)$ with a straight line. This step is illustrated in Figure \ref{FirstFewSteps} (Right). Solid black, and dotted black lines represent $\hat{G}_i, G_i$, respectively.
\vspace{3mm}
\item[Step 6: (Looping)] If $\sigmax, \rhomax$ are too small compared to $\kamax$, then one or both of the curves $G_i$ in Step 2 will \emph{loop} \cite{banzhaf2018g}. A curve $ G(\vect{p}_s, \ktop, \ka_f, \Delta)$ with initial and final headings $\z_s, \z_f$ (resp.), will loop if $|\z_f -\z_s|< G_{\z}^{s_6}(\vect{p}_s, \ktop, \ka_f, \Delta=s_3) $. If such looping occurs on curve $i=1,3$, we decrease the magnitude of the top curvature ${\ktop}_i$, and return to Step 2. Minimum arc-length is guaranteed by computing the smallest magnitude values of ${\ktop}_i$ that ensure a connection in Step 5.
\end{description}

\subsection{Connecting $G3$ curves with a $G3$ curve}
Two problems may arise in Steps 1-6 above. The first, we call \emph{overlap}. Overlap arises when the endpoints of the two curves $\hat{G}_i, i=1,3$ computed in Step 5 cannot be connected with a straight line that preserves orientation. The phenomenon is illustrated in Figure \ref{overlap} (Left), and can occur if the centers $\vect{x}_{c_i}$ of the representative circles of the curves $G_i$ in Step 2 are too close together ($< r_1 + r_3$ where $r_i$ is the radius \eqref{router} for the curve $\hat{G}_i$). Observe that overlap is a result of $\sigmax, \rhomax$ being small relative to ${\ktop}_i$: by the time the curvature of curve $\hat{G}_i$ changes from ${\ktop}_i$ to 0, the final configuration has already passed that of the second curve and cannot be connected with a straight line that preserves orientation. Therefore, if overlap occurs we attempt to find a curvature other than 0 that can be used to connect $\hat{G}_i$. This is summarized here:
\begin{description}[style=unboxed,leftmargin=0cm] 
\vspace{3mm}
\item[Step 7: (Overlap)]  If overlap occurs, we cut the curves $\hat{G}_i, i=1,3$ at $s=\Delta_i$. That is, let $\hat{G}_1^{\text{cut}} = G(\vect{p}_i, {\ktop}_i, \ka_{f_i}={\ktop}_i, \Delta_i)$, and we attempt to connect $\hat{G}_1^{\text{cut}}, i=1,3$ with a third $G_3$ curve. If no such third curve exists, we slowly decrease the value of $\sigmax$, and go back to Step 1. Figure \ref{overlap} (right) illustrates an example where $\sigmax$ is decreased until $\hat{G}_i^{\text{cut}}, i=1,3$ can be connected with a third curve.  Minimum arc-length is preserved by finding the largest feasible value of $\sigmax$ that allows for such a connection.
\end{description}
\vspace{3mm}
The second problem arises if no Dubins' solution can be found in Step 4. Similar to overlap, this arises if the centers $\vect{x}_{c_i}$ are too close together. For this reason, we again decrease the value of $\sigmax$ and return to Step 1. At the end of Steps 1-7, the process described above will have computed a $G3$ curve $P_{\rhomax}(s)$ that solves $\eqref{OP}$.  
\begin{figure}
    \centering
    \includegraphics[width=0.8\linewidth]{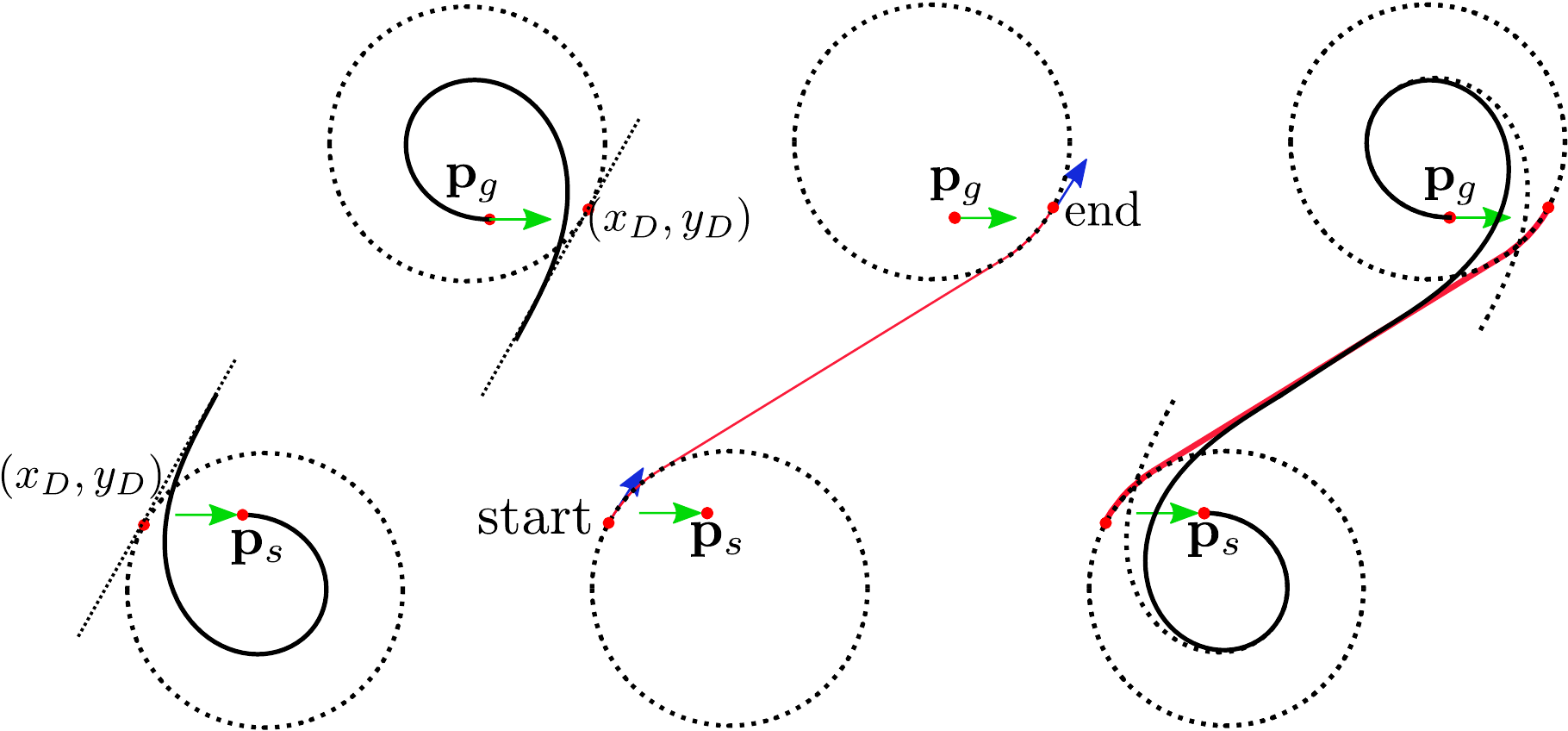}
    \caption{(Left) Representative circles around $\vect{p}_s$ and $\vect{p}_g$ in reverse. (Mid) Dubins'-like solution between start and goal states. (Right) Solution path.}
    \label{FirstFewSteps}
\end{figure}

\begin{figure}[t]
    \centering
    \includegraphics[width=0.8\linewidth]{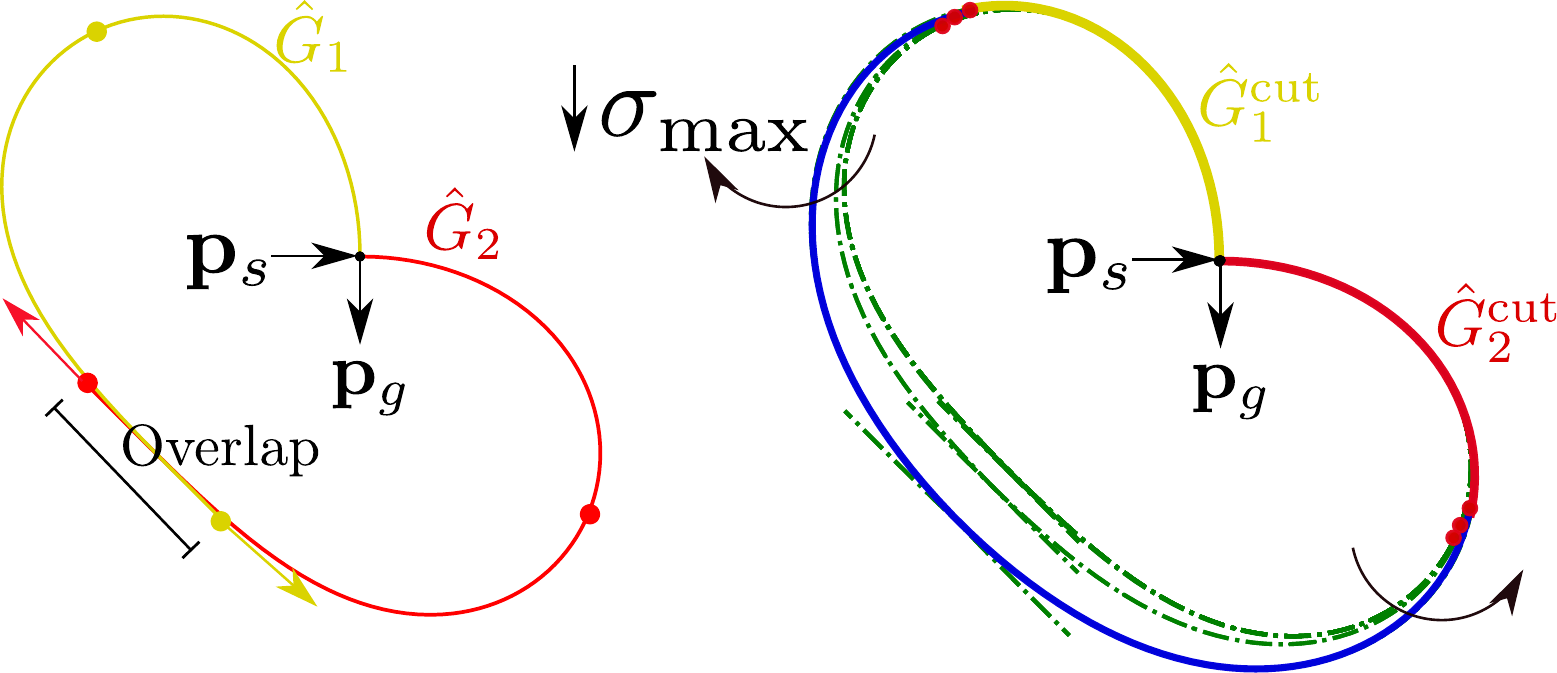}
    \caption{Step 7. (Left) Illustration of overlap. start and goal configurations too close together resulting in curve $P2$ terminating behind $P1$. (Right) Stretching of curves $P1, P2$ by decreasing $\sigmax$ until $P1, P2$ can be connected by third curve.}  
    \label{overlap}
\end{figure}

\section{Computing Velocity Profiles}
\label{VelSec}
In this section we describe how to compute a velocity profile $v(s)$ that minimizes the cost function in \eqref{CostFunc} over a given $G3$ path. At a high level, the process is best described as an instance of Sequential Quadratic Programming (SQP). Similar to the procedure outlined in \cite{biegler1984solution}, we change the continuous optimization problem of computing an optimal velocity profile given a curve, to a discrete problem that asymptotically approaches the continuous version. Recall that we wish to compute $\beta^*=\min_{\beta\in B}C(P_{\rho}(s), v_{\beta}(s))$ where $P_{\rho}(s)$ is known and fixed and has final arc-length $s_f$ (see \eqref{OPAPPROX}). To wit, we make two assumptions: first, for sufficiently large $n\in\mathbb{N}_{\geq 3}$, the value of $v_{\beta^*}(s)$ is given as a polynomial in $s$ of degree no more $n+1$. Second, that for $n-1$ sampled arc-lengths $\{s_2>0, s_3,\dots, s_n=s_f\}$ in the interval $[0, s_f]$, the function $\alpha^*(s)=dv^*(s)/ds$ takes values $\alpha^*(s_i)=a_i^*, i=2,\dots,n$ for $\vect{a}^*=[a_2^*,\dots, a^*_n=0]$. These assumptions allow us to discretize the problem as follows:  given any decision vector $\vect{a}=[a_2, a_3,\dots, a_n=0]$, let 
\be
\label{vamodel}
\begin{split}
\alpha^{\vect{a}}(s)&= \sum_{i=1}^{n}p_is^i, \\ 
v^{\vect{a}}(s)& = v_s +  \sum_{i=1}^{n}\frac{p_i}{i+1}s^{i+1}, \ \beta^{\vect{a}}(s)=\sum_{i=1}^nip_is^{i-1}
\end{split}
\ee
where
\be
\label{lineqvel}
\begin{bmatrix}
p_1\\
p_2\\
\vdots\\
p_n
\end{bmatrix}=V^{-1}\begin{bmatrix}
v_f - v_s\\
a_2\\
\vdots\\
a_n=0
\end{bmatrix}
\ee
\be
\label{VM}
V = \begin{bmatrix}
s_n^2/2 &s_n^3/3 &s_n^4/4 &\dots & s_n^{n+1}/(n+1)\\
s_2 & s_2^2 & s_2^3 &\dots & s_2^{n}\\
s_3 & s_3^2 & s_3^3 &\dots & s_3^{n}\\
\vdots &\vdots &\vdots &\ddots &\vdots\\
s_{n} & s_{n}^2 & s_{n}^3 &\dots & s_{n}^{n}
\end{bmatrix}.
\ee
Observe that $v^{\vect{a}}(s), \alpha^{\vect{a}}(s)$ in \eqref{vamodel} satisfy the boundary (if not magnitude bound) constraints in \eqref{NEWOP}. Indeed, from \eqref{vamodel}, it is clear that $\alpha^{\vect{a}}(0)=0, v^{\vect{a}}(0)=v_s$. Moreover, the first and last equations in the system of linear equations $V\times [p_1, \dots, p_n]^T=[v_f-v_s, a_2, \dots, a_n=0]$ ensure that $v^{\vect{a}}(s_f)=v_g, \alpha^{\vect{a}}(s_f)=0$, respectively. The remaining $n-2$ equations in this system ensure that $\alpha^{\vect{a}}(s_i)=a_i, i=2,\dots, n-1$. Therefore, the above equations provide a means of  constructing functions $v^{\vect{a}}(s),\alpha^{\vect{a}}(s), \beta^{\vect{a}}(s)$ that satisfy the boundary requirements of \eqref{NEWOP} and $\alpha^{\vect{a}}(s_i)=a_i, i=2,\dots,n$ for any choice $\vect{a}$. Observing that $V$ in \eqref{VM} is independent of $\vect{a}$, we note that if the sampled arc-lengths $s_i, i=2,\dots, n$ are fixed across all choices of $\vect{a}$, then any iterative process to minimize cost over samples $\vect{a}$ would only need to compute $V^{-1}$ once. Moreover, we note that $V$ is guaranteed to be non-singular provided that $s_i\neq s_j, \forall i,j=2,\dots,n$. 

We now describe how to select ${\vect{a}}$ that minimizes the cost $C(\vect{a})=C(P_{\rho}(s), v^{\vect{a}}(s))$. This solution provides the velocity profile $v^{\vect{a}}(s)$ required in \eqref{OPAPPROX}. For fixed path $P_{\rho}(s)$, let

\be
\begin{split}
\mathcal{L}\big(\vect{a}\big) = &w_a\tilde{C}_a+w_{\J}\tilde{C}_{\J} + w_y\tilde{C}_y+w_t\tilde{C}_t\\
&+\delta(\alpha^{\vect{a}}(s), \alpha_{\text{max}}) + \delta(v^{\vect{a}}(s), v_{\text{max}}) + \delta(\beta^{\vect{a}}(s), \beta_{\text{max}})
\end{split}
\ee

where $\delta(x, x_{\text{max}}) =  (\max(|x| - x_{\text{max}}, 0))^{M} $ for large $M$, is a boundary function penalizing values of $|x|$ above $x_{\text{max}}$, and
where $\tilde{C}_a, \tilde{C}_y,\tilde{C}_{\J},\tilde{C}_t$ are given in \eqref{NEWCOST}. Observe that for sufficiently large $M$, and under the assumptions of this section, 
\be
\label{onestep}
\begin{split}
    \beta^*(s)=\underset{\beta\in B}{\text{argmin}} \ C(P_{\rho}, v_{\beta}) \ \iff\\
    \begin{cases}
    \beta^*(s)&=\beta^{\vect{a}^*}(s),\\
    \vect{a}^*&=\underset{\vect{a}^*\in \mathbb{R}^{n-1}}{\text{argmin}} C(\vect{a}),\\
    C(\vect{a}) &=  \int_{0}^{s_f}\mathcal{L}(\vect{a})ds
    \end{cases}.
\end{split}
\ee
Indeed, the assumption in this section, is that for sufficiently large $n$, the cost-minimizing velocity profile $v^*(s)$ can be expressed as a polynomial of degree $n+1$. Therefore, $\beta^*(s)$ is a polynomial of degree $n-1$ implying that there must exist a vector $\vect{a}^*$ with $\beta^*(s)=\beta^{\vect{a}^*}(s)$ in accordance with \eqref{vamodel}-\eqref{VM}. Next, we observe that for sufficiently large $M$, a choice of $\vect{a}$ will minimize $C(\vect{a}) =  \int_{0}^{s_f}\mathcal{L}(\vect{a})ds$ only if $\beta^{\vect{a}}(s)\in B$. Therefore, to determine the function $\beta^*(s)$, for sufficiently large values of $n, M$, we need only solve \eqref{onestep}. This can be done using SQP. Finally, observing that \eqref{onestep} is unconstrained, SQP reduces to gradient decent allowing us to use Python's built-in SQPL minimization function.

\section{Results}
We now demonstrate the benefits of our approach. We begin by illustrating how the techniques developed here can be used to anticipate the driving styles of several archetypal users, namely, a comfort-favoring (comfort) user who prefers low speed and high comfort, a moderate user who favors a mix of speed and comfort, and a speed-favoring (speed) user. Next, we compare the theoretical cost of paths generated using the proposed method to those from~\cite{banzhaf2018g}. As implied by Theorem \ref{G3asOpt}, any path that solves \eqref{NEWOP} is a $G3$ path. It is for this reason that we compare our methods for generating trajectories to those proposed in~\cite{banzhaf2018g}, the current state-of-the-art $G3$ path generation technique.
Finally, we measure tracking error and final cost of our proposed trajectories using Matlab's seven degree-of-freedom simulator and built-in Stanley controller~\cite{hoffmann2007autonomous}. The techniques described here were encoded entirely in Python 3.7 (Spyder), and the results were obtained using a desktop equipped with an AMD Ryzen 3 2200G processor and 8GB of RAM running Windows 10 OS.
\subsection{Setup}

\begin{description}[style=unboxed,leftmargin=0cm]
\item[Unit-less Weighting]  The features in the cost function  \eqref{NEWCOST} represent different physical quantities. In order for our cost function to represent a meaningful trade-off between these features for given weights, a scaling factor is included in the weight~\cite{hogan2009sensitivity}. Similar to the scaling techniques in~\cite{gulati2013nonlinear}, we let $\hat{C}_{m}, m\in\{a, t, y, \J\}$ denote the cost of the trajectory that solves \eqref{NEWOP} given weight $w_{m}=1$, $w_{n}=0$, $\forall n\in \{a, t, y, \J\}$, $n\neq m$. We then scale the weights as
$$
\hat{w}_{m}=\frac{w_m}{\hat{C}_m}\sum_{ m\in\{a, \J, y, t\}}\hat{C}_m, \ \ m\in\{a, \J, y, t\}.
$$
This scaling process\footnote{NOTE: The scaling process described here is not included in the timing of our algorithm reported in Table \ref{Runtime}} has the following benefits: first, if all features are weighted equally, then the scaled cost of each feature $\hat{w}_m\hat{C}_m$ will be equal. Second, it may be the case that for some $m\in \{a, \J, y, t\}$, the feature cost $C_m$ is very sensitive to changes in the trajectory in a neighborhood around the optimal trajectory. For example, IS jerk is the integral of a fifth order polynomial of $v$ in \eqref{NEWCOST} and is therefore highly sensitive to changes in $v$ at high speeds,  In these cases, if the scaling factor used any but the optimal value $\hat{C}_m$ for each feature, it would risk of over-reducing the weight on this feature.
\\
\item[Parameter Bounds] For all experiments, the curvature parameter limits are given by~\cite{banzhaf2018g}:

\begin{equation}
    \kamax = 0.1982 m^{-1}, \  \sigmax = 0.1868 m^{-2} \ 
    \rhomax = 0.3905 m^{-3}. 
\end{equation}

while the limits on velocity, acceleration, and instantaneous jerk are given, respectively, by~\cite{bae2019toward}

\begin{equation}
    \vmax = 100 km/hr, \ a_{\text{max}} = 0.9m/s^2, \ b_{\text{max}}=0.6 m/s^3.
\end{equation}

\subsection{Evaluation}
\item[Qualitative Analysis] To qualitatively evaluate our methods, we analyse the trajectories for three archetypal users: a speed user, who prefers low travel time over comfort, an intermediate user who values comfort and speed equally, and a comfort user who emphasizes comfort over speed. To simplify the comparison of these users, we combine the scaled but unweighted features IS acceleration, IS jerk, and IS yaw into a \emph{discomfort cost}, and compare this to \emph{time cost} for each user:
\begin{equation*}
    \begin{split}
        \text{discomfort cost} = \sum_{m\in \{a, \J, y\}}\hat{w}_m\hat{C}_m, \ \text{time cost} = \hat{w}_m\hat{C}_t.
    \end{split}
\end{equation*}
Given the simplistic nature of the speed, intermediate, and comfort archetypes, the expected form of their favored trajectories is apparent: the speed user wishes to minimize travel time at the expense of comfort. Therefore, she will favor trajectories featuring short paths and high velocities. The comfort user, on the other hand, prefers comfortable trajectories over short travel times. Finally, the intermediate driver should favor a balance of the two other users. In this experiment, the start-goal configurations for each user are:
$$
p_s = [0, 0, 0, 0], \ p_f=[30, 105,0, 0.1695],
$$
with $v_s=v_g = 7m/s$. The weights for each user are:
\begin{equation}
    \begin{split}
        \text{speed User} \ &w_t=0.9, w_y=w_{\J}=w_a = 0.0333,\\
        \text{Intermediate User} \ &w_t=w_y=w_{\J}=w_a=0.25\\
        \text{comfort User} \ &w_t=0.01, w_y=w_{\J}=w_a=0.33.
    \end{split}
\end{equation}
Thus, the speed user places a high weight on time, and a small weight on IS yaw rate, IS Jerk, and IS acceleration (i.e., small weight on discomfort), while the comfort user places a low weight on time, and the intermediate user places an equal weight on all features. 
\begin{figure}[t]
    \centering
    \includegraphics[width=\linewidth]{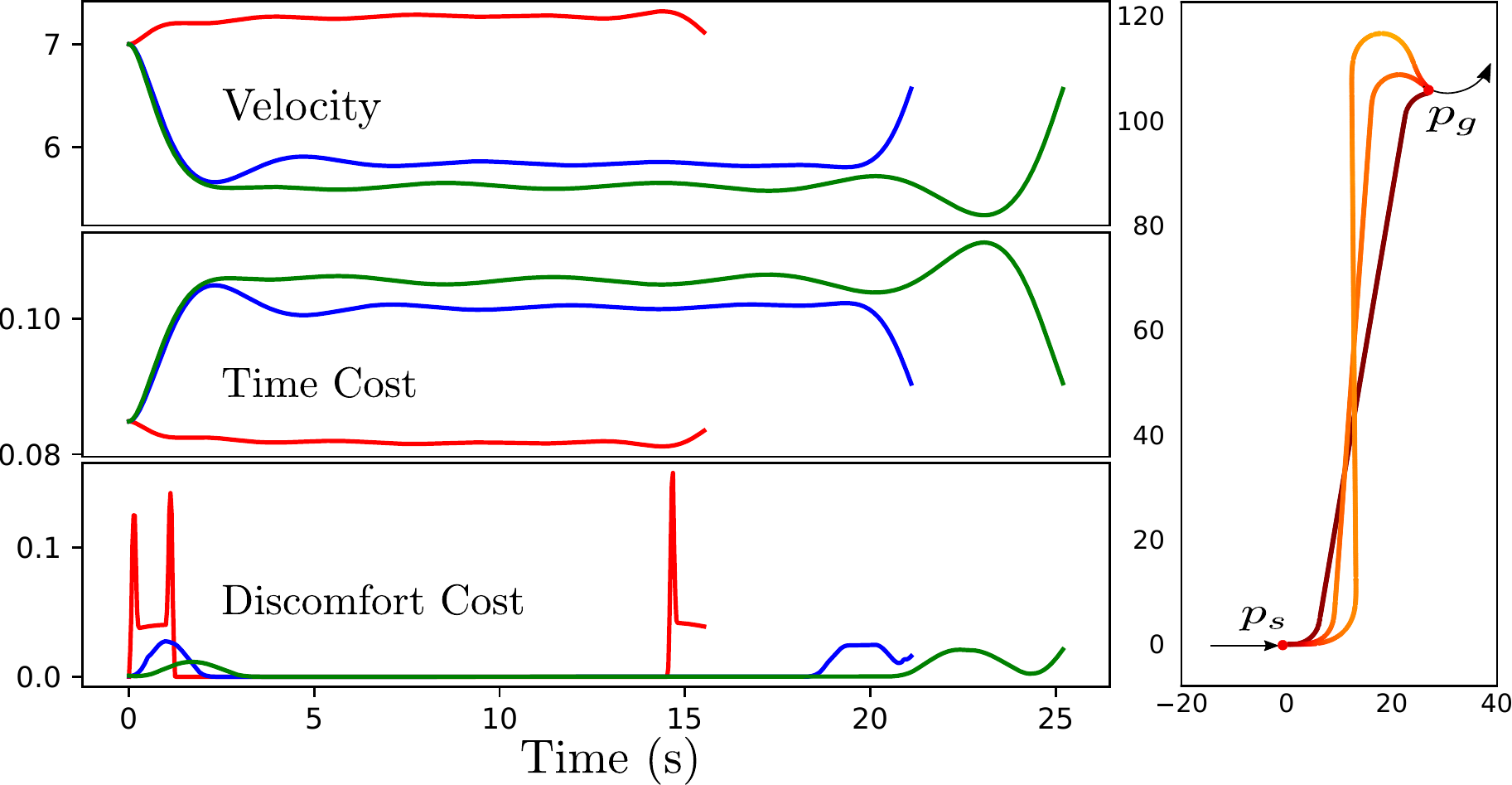}
    \caption{Comparison of three users. Left: velocity (m/s) (top), time cost (mid) and discomfort cost (bottom) for three users. Red, blue green curves represent speed, intermediate, and comfort users respectively  Right: trajectories of the tree users. The longest trajectory is that of the comfort user, the shortest is that of the speed. Here, color represents velocity.}  
    \label{Qualitative}
\end{figure} 

The results are illustrated in Figure~\ref{Qualitative}. The left image illustrates the velocity profile (top), time cost (mid), and discomfort cost (bottom) for the speed (red), intermediate (blue), and comfort (green) drivers. The right image illustrates the resulting trajectories for each user with color representing velocity. The shortest trajectory is that of the speed user. Observe that her path is short and velocities high, reflecting her preference towards short travel times. The longest trajectory is that of the comfort user, and we note that her trajectory features consistently low velocities and a long path. This tendency towards longer path length, allows the comfort user to change velocities over a longer time period allowing her to reach lower velocities with lower acceleration and jerk. This is reflected in the low discomfort cost of the comfort user. The middle trajectory is that of the intermediate user, who decreases her velocity on the turn to minimize jerk/acceleration, but then speeds up again on the straight portion of the path. 

A further example of the three archetypes is given in Figure \ref{PNWHL} in which a lane change maneuver is preformed from $(0, 0, 0, 0)$ to $(50, 6, 0, 0)$ with an initial and final velocity of  10 m/s. Cars are drawn ever 0.75 seconds, with the dotted vertical lines representing horizontal position of the top-most car at each time step. The comfort driver has an average speed of 9.4 m/s and completes the maneuver in 5.30 seconds, while the mixed and speed drivers have average speeds and final times of (10.09 m/s, 4.91 seconds), and (10.80 m/s, 4.55 seconds), respectively. Observe that the car following the red trajectory arrives at the goal almost a full time-step ahead of the blue-trajectory car. 

\vspace{3mm}

\item[Path Evaluation] We now evaluate the quality of the paths generated using our technique. To this end, we isolate the effect of the path on the final cost by comparing two methods for generating trajectories:
\begin{itemize}
    \item \emph{Ours}:  Paths and velocity profiles are computed using the methods detailed in this paper.
    \item \emph{Benchmark}: velocity profiles are computed using the techniques outlined here, but paths are computed from \cite{banzhaf2018g}.
\end{itemize}
We define the $\%$ Savings as the difference in cost (given in \eqref{NEWCOST}) for trajectories generated using Benchmark and Ours as a percentage of the cost of Benchmark-based trajectories.

For this experiment, 1300 start-goal pairs were randomly generated with initial and final curvatures ranging from $-\kamax$ to $\kamax$, initial and final velocities (taken to be equal) ranging from $0$ to $\vmax$, initial and final $x,y$-values ranging from $-20m$ to $20m$, and weights ranging from 0 to 1. Initial and final velocities were chosen to reflect the Euclidean distance between start-goal pairs, and pairs with no feasible solution were disregarded. The results of these experiments are tabulated in Table \ref{PathSave}. Here, a feature (IS time, IS yaw, IS acceleration, IS jerk) is said to be \emph{dominant} if the corresponding weight is greater than 0.5. If no weight is greater than 0.5, the features are said to be \emph{blended}. It was found that the average percent savings was $33.10\%$ across all experiments. Moreover, if the velocity is not optimized using our method, but is assumed to be constant instead, we see see a $45.5\%$ savings. As Table \ref{PathSave} implies, the largest savings is typically found when IS jerk is the dominant feature, and when initial and final velocities are medium to high. This implies that the techniques presented here have greater value in highway driving scenarios (where velocities are high, and high values of jerk can lead to dangerous maneuvers) than driving in a parking-lot, but can still reduce costs by around $25\%$ on average. A detailed view of the distribution of the percent savings for the experiments is given in Figure \ref{DISTR}. Observe that the instances where the percent savings are substantial $(>50\%)$ are not infrequent. In particular, in $32\%$ of experiments, the savings was at least $50\%$, while in $26\%$ of experiments, the savings was at least $70\%$, and $17\%$ of experiments had a savings of at least $80\%$.

\begin{table}
\centering
\begin{tabular}{@{}   l l c @{}} 
\toprule
 & & Avg. Percent Savings (\%) \\
\midrule
\multirow{5}{4em}{Dominant Feature} & IS Time & 23.04 (31.4)\\
& IS Yaw &  28.69 (32.4) \\
& IS Acceleration & 27.66 (32.5)\\
& IS Jerk & 42.81 (35.1)\\
& Blended & 39.36 (35.9)\\
\midrule
\multirow{3}{4em}{Initial/Final Velocity} & Low $(0,  \vmax/3]$ & 24.38 (31.9)\\
& Med $(\vmax/3, 2\vmax/3]$ & 41.74 (35.5)\\
& High $(2\vmax/3, \vmax]$ & 39.71 (34.3)\\
\bottomrule
\end{tabular}
\caption{Average cost savings. Breakdown by dominant feature and initial/final velocity.}
\label{PathSave}
\end{table}

\begin{figure}[t]
    \centering
    \includegraphics[width=0.8\linewidth]{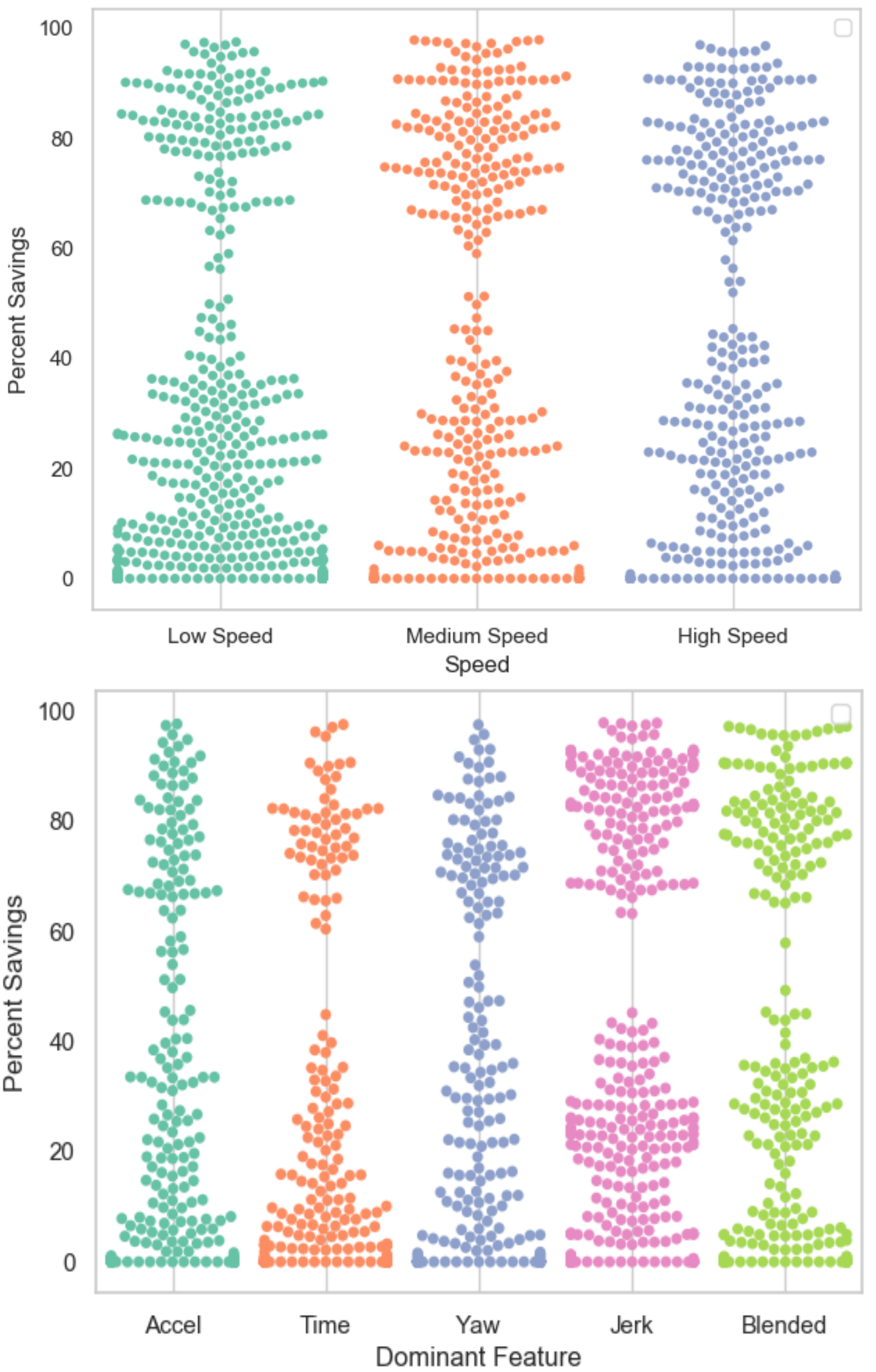}
    \caption{Percent savings distribution by initial/final velocity (top), and dominant feature (bottom).}  
    \label{DISTR}
\end{figure}

\begin{figure}[t]
    \centering
    \includegraphics[width=0.8\linewidth]{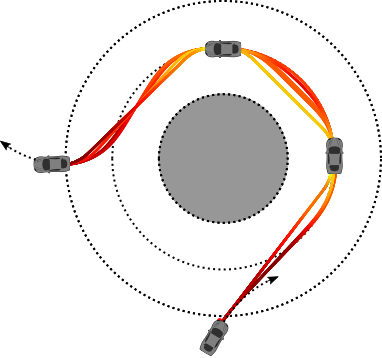}
    \caption{Optimal trajectories for weights between way-points of a roundabout. Cars represent fixed way-points (position, curvature, velocity) while color gradient of each trajectory represents velocity.}  
    \label{Roundabout}
\end{figure} 
\begin{figure}[t]
    \centering
    \includegraphics[width=\linewidth]{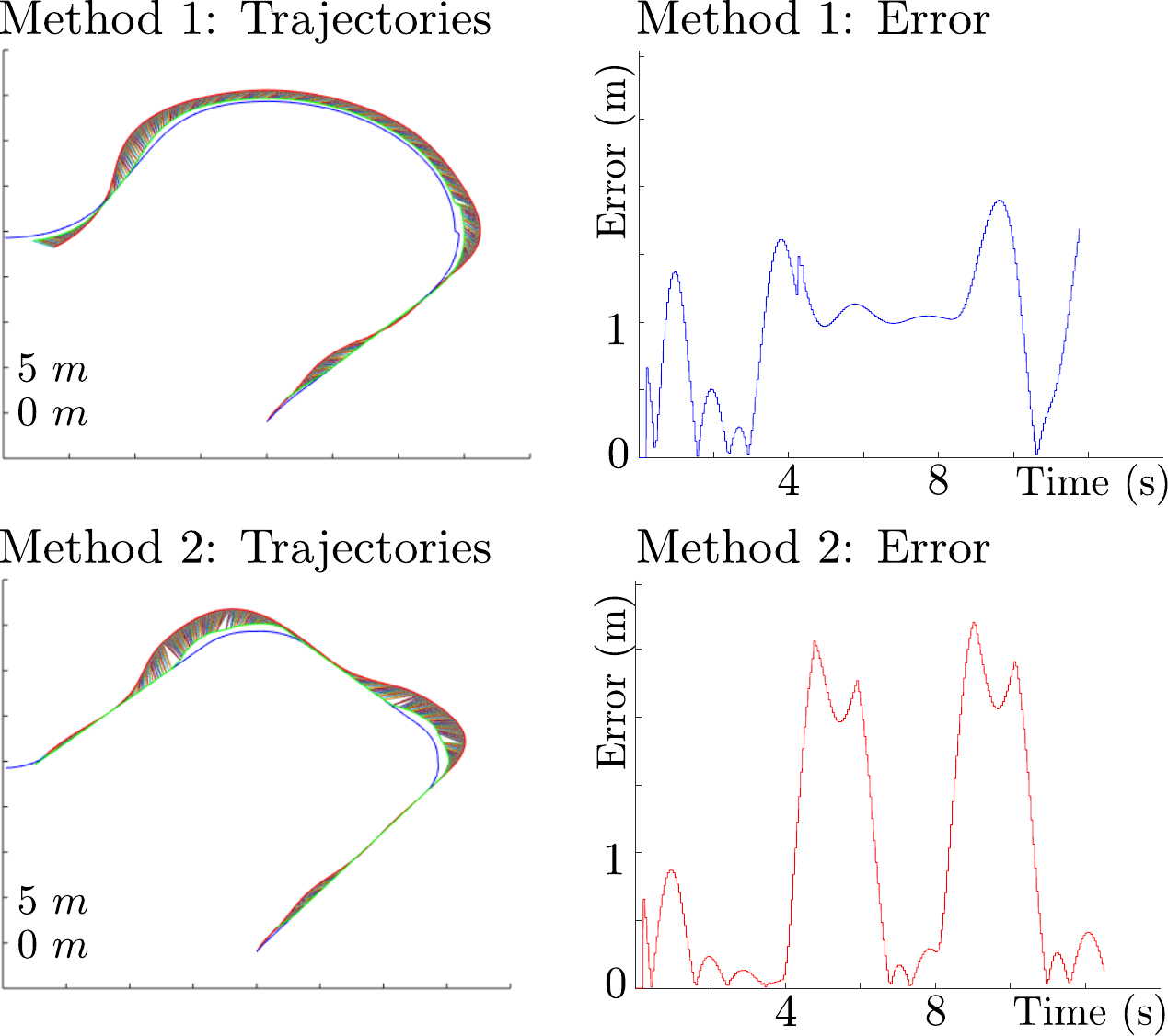}
    \caption{Simulated trajectories (left) and Lateral Error over time (right) for example roundabout maneuver using Methods 1 (Ours) and 2 (Benchmark) (top and bottom, respectively).}  
    \label{MatlabTraj}
\end{figure} 
\begin{figure}[t]
    \centering
    \includegraphics[width=0.8\linewidth]{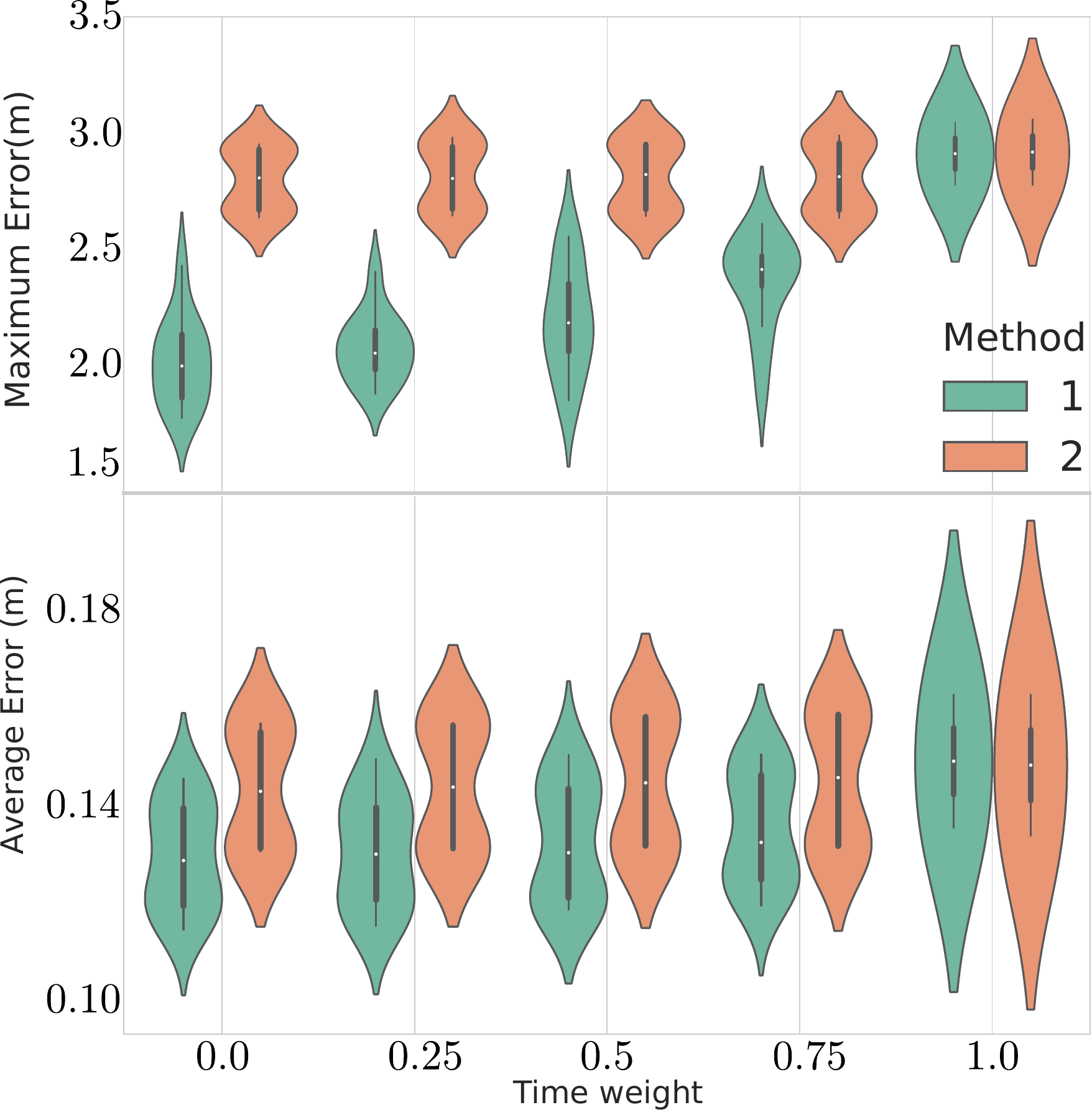}
    \caption{Average maximum errors (top) and average average errors (bottom) for Methods 1 and 2 categorized by time weight.}  
    \label{MatlabData}
\end{figure} 

\begin{table}
\centering
\begin{tabular}{@{}   c c c c c c @{}} 
\toprule
Time weight ($w_t$) & Cost Savings (\%) & \multicolumn{2}{c}{Max Err (m)} & \multicolumn{2}{c}{Avg Err (m)}\\
\cmidrule{3-4}
\cmidrule{5-6}
& & O & B & O & B\\
\midrule
0 &33.6 &2.05 &2.83 &0.131 &0.145\\
0.25 &19.7 &2.11 &2.84 &0.132 &0.145\\
0.5 & 12.2 &2.23&2.85&0.134&0.147\\
0.75 &8.4 &2.40&2.85&0.137     &0.147\\
1 &0.35 &2.95	&2.95&0.151     &0.150\\
\bottomrule
\end{tabular}
\caption{Average cost savings, maximum lateral error and average lateral error for two methods (O = Ours, B = Benchmark), categorized by time weight.}
\label{PathSaveMAT}
\end{table}

\vspace{3mm}

\item[Tracking Error and Simulation Cost] The above results imply that the methods proposed herein can be used to compute reference trajectories that result in substantial cost reduction if tracked perfectly by a vehicle.  However, it is highly unlikely that any reference will be perfectly tracked.  This begs two questions. First will the imperfections in the vehicles controller negate the theoretical cost savings? Second, even if there is still a reduction in cost using our method, does this come at the expense of the tracking error?  That is, are reference trajectories generated using our method harder to track than the Benchmark method?  In this section, we use a basic out-of-the-box controller that has not been optimized to illustrate that even with a sub-optimal controller, there is still substantial cost savings.  Moreover, there is typically even a \emph{reduction} in the tracking error.

In this section. We again consider the methods "Ours" and "Benchmark" defined above. The experiments in this section were carried out using MATLAB's seven degree of freedom driving simulator in its autonomous driving toolbox. The reference trajectories were tracked using MATLAB's Stanley controller with default gains. Here, the maximum error is the maximum lateral error, and the average error is the integral of the lateral error normalized to arc-length of the path. We consider three maneuvers between start goal pairs chosen to coincide with possible way-points of a roundabout. Dimensions and velocities for this problem are in accordance with U.S. department of transportation guidelines \cite[Chapter~6]{robinson2000roundabouts}. The single-lane roundabout, illustrated in Figure \ref{Roundabout}, has lane radius (middle circle) of 45 ft, The initial way-point for this experiment is on a connecting road 65 ft away from the center of the roundabout. We assume an initial velocity of 17 mph. At the second and third way-points, the velocity is assumed to be 14 mph which increases to 17 mph at the final way-point. Trajectories between these way-points were planned for 50 weights: 5 values of the time weight $w_t$, (0, 0.25, 0.5, 0.75, 1) and 10 randomly selected values of the other weights for each value of $w_t$ (ensuring that the sum of the weights equaled 1). Trajectories and their lateral errors for one example weight ($w_{\J}=1, w_t=w_a=w_y=0$) is shown in Figure \ref{MatlabTraj}. The left-most images illustrate the trajectories obtained using the two methods. Blue, red, and green paths represent theoretical, tracked, and driven paths, respectively. Short line segments between red and green paths connect driven configurations with the reference configurations being tracked at each time step. Observe that the high peaks in error (right-most images) experienced by cars following a Benchmark reference trajectory, are spread more evenly along the trajectory for cars tracking Ours trajectories. This is because paths generated using Ours will have more gradual changes in curvature given the example weight. For each value of $w_t$, the values of maximum error and average error of each experiment are tabulated in Figure \ref{MatlabData}, and Table \ref{PathSaveMAT}. Observe that (as implied by the example in Figure \ref{MatlabTraj}), savings in every metric is highest for weights that favor comfort over time. Observe further, that the maximum lateral tracking error can be reduced by approximately 0.8m on average for certain weights. The roundabout radius and velocities were scaled upwards (again following the guidelines of \cite{robinson2000roundabouts}). However, the results were very similar to those reported above.

\vspace{3mm}

\item[Run-time] The focus of this work was on improving the quality of computed trajectories given user weights. As such, we did not focus on optimizing for run time. However, we believe that with proper software engineering, one could substantially reduce the run-time of our procedure though it may still exceed the state-of-the-art (SOTA). 
\begin{table}[htbp]
\centering
\begin{tabular}{@{}c c c c @{}} 
\toprule
& & \multicolumn{2}{c}{Run-times (s)}\\
\cmidrule{3-4}
 Sub-routine & SOTA source & SOTA (C) & Ours (Python)\\
 \midrule
 Calculate a path (Step 1) &  \cite{banzhaf2018g}  & $6.8 \times 10^{-5}$ & $4.3 \times10^{-3}$\\
 Optimize velocity (Step 2) &  \cite{4650924} & $2.2 \times 10 ^{-4}$ & $4.1 \times 10^{-2}$\\
 Total (recursive Step 1 \& 2) & \cite{banzhaf2018g} \& \cite{4650924} & $1.2\times10^{-3}$ & $5.1\times10^{-1}$\\
\bottomrule
\end{tabular}
\caption{Comparison of run-times. Our methods (Ours) compared against state-of-the-art (SOTA) for each sub-routine of the procedure.}
\label{Runtime}
\end{table}
Run-times for each of the two steps in our procedure are compared with those of the SOTA in Table \ref{Runtime} which also includes the total time. Reference \cite{4650924} as the SOTA for velocity profile planning given a path due to its use of third order trapezoidal methods. Trapezoidal methods (and variants thereof) are a widely used method for velocity profile planning \cite{fang2019smooth, 8843215}. Variants of this method use higher-order velocity models than those employed by \cite{4650924}, improving the quality at the expense of computation time. We therefore use computation times reported in \cite{4650924} as a lower bound on trapezoidal velocity method variants. We observe that the SOTA is on the order of 100 times faster than he methods proposed here. However, our algorithm is encoded entirely in Python which tends to run 10-100 times slower than C for iterative procedures like ours.
\end{description}

\section{Concluding Remarks}
We consider the problem of computing trajectories between start and goal states that optimize a trade off between comfort at time.  We offer a simplified approximation of optimization problem \eqref{NEWOP} that replaces the two continuous functions with $n$ constants.  Methods to compute paths given one of these constants, and a velocity profile given this path and the $n-1$ remaining constants are also provided. The resulting technique is verified with numerical examples.

\appendix
\section{Proofs}

\begin{figure}[t]
    \centering
    \includegraphics[width=\linewidth]{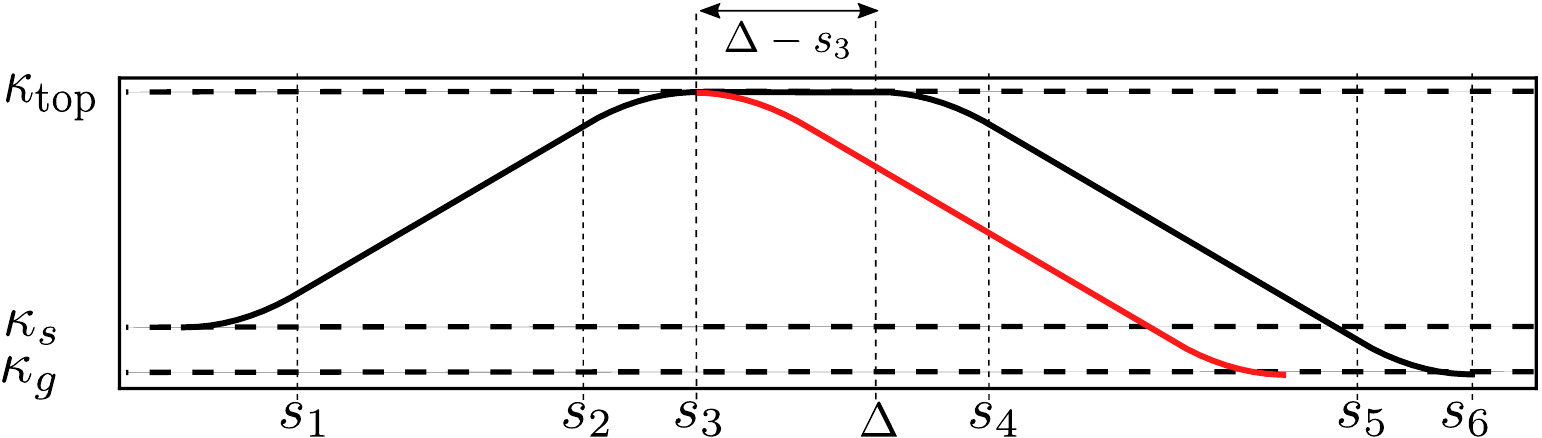}
    \caption{Curvature profile of two curves in the set $\mathcal{G}$. (Red): curvature of the curve $G\varmid$. (Black): curvature of the curve $G\varD$ for $\Delta>s_3$.}  
    \label{IncDelta}
\end{figure}

\begin{proof}[Proof of Lemma \ref{LemmDelta}] Consider a set of curves $\mathcal{G}$.  Let 
$$G(\Delta):\mathbb{R}_{\geq s_3}\rightarrow \mathcal{G},$$
be the function that sends values of $\Delta \geq s_3$ to the corresponding $G3$ curve in $\mathcal{G}$.  Let $x(s, \Delta)$ $y(s, \Delta)$, $\z(s, \Delta)$, $\ka(s, \Delta)$ denote the path state profiles of the curve $G(\Delta)$ at arc-length $s$, and let $x_f(\Delta), y_f(\Delta), \z_f(\Delta), \ka_f(\Delta)$ denote the final path states of $G(\Delta)$.  From Equations~\eqref{dynamics}, \eqref{ks}, we make the following observation:
\small
\be
\label{theta_shift}
\z(s, \Delta)=\begin{cases}
\z(s, s_3), \ &\text{if } s\leq s_3\\
\z(s_3, s_3) + \ktop(s-s_3),& \text{if } s_3<s\leq \Delta\\
\ktop(\Delta - s_3)+\z(s-(\Delta - s_3), s_3),\  &\text{if } \Delta<s\leq s_6.
\end{cases}
\ee
\normalsize

Note (as implied by Figure \ref{IncDelta}), that if $s_6$ is the final arc-length of the curve $G(\Delta)$, then $s_6-(\Delta-s_3)$ is equal to the final arc-length of the curve $G(s_3)$.  Therefore, we can conclude from \eqref{theta_shift} that $\z_f(\Delta) = \ktop (\Delta-s_3)+\z_f(s_3).$ Setting this last to $\z_g$, and solving for $\Delta$ completes the proof.
\end{proof}

\begin{proof}[Proof Of Theorem \ref{curve to circle}] This proof uses the notation of the Proof of Lemma \ref{LemmDelta}.  Given a set of curves $\mathcal{G}$, it can be shown from equations \eqref{theta_shift}, and~\eqref{dynamics}, that
\small
\be
\label{finalsvals}
\begin{split}
 \z_f(\Delta)=&\z_f(s_3)+\ktop(\Delta-s_3),\\
 x_f(\Delta)=&x(s_3, s_3)-\frac{\sin(\z(s_3))-\sin(\ktop(\Delta-s_3)+\z(s_3))}{\ktop}\\
        &+\cos(\ktop(\Delta-s_3))(x_f(s_3)-x(s_3, s_3))\\
        &-\sin(\ktop(\Delta-s_3))(y_f(s_3)-y(s_3, s_3)),\\
        y_f(\Delta)=&y(s_3, s_3)-\frac{-\cos(\z(s_3))+\cos(\ktop(\Delta-s_3)+\z(s_3))}{\ktop}\\
        &+\sin(\ktop(\Delta-s_3))(x_f(s_3)-x(s_3, s_3))\\
        &+\cos(\ktop(\Delta-s_3))(y_f(s_3)-y(s_3, s_3)).
\end{split}
\ee
\normalsize

Observe now that the shortest distance from the point  $\vect{x}_c=[x_c, y_c]^T$ given in \eqref{centers} to the line passing through $(x_f(\Delta), y_f(\Delta))$ whose slope is $\tan{\z_f(\Delta)}$, is given by

$$
d(\Delta)=\frac{ x_c\tan{\z_f(\Delta)} - y_c + y_f(\Delta)-x_f(\Delta)\tan{\z_f(\Delta)}}{\sqrt{\tan{\z_f(\Delta)}^2+1}}.
$$

Replacing \eqref{finalsvals} in $d(\Delta)$, yields a complete cancellation of the term $\Delta$.  The remaining parameters $\ktop, s_3, \vect{x}_c, etc.,$ are identical for all members of $\mathcal{G}$, thus we can conclude that $d(\Delta)$ is the same for all members of $\mathcal{G}$ as desired.
\end{proof}

\bibliographystyle{IEEEtran}
\bibliography{references}

\begin{IEEEbiography}[{\includegraphics[width=1in,height=1.25in,clip,keepaspectratio]{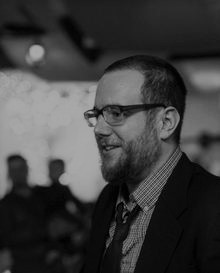}}]{Alexander Botros}
is a Ph.D student in the Autonomous Systems Lab at the University of Waterloo. His research focuses primarily on local planners and trajectory generation for autonomous vehicles. In particular, Alex is researching the problem of computing minimal t-spanning sets of edges for state lattices with the goal of using these sets as motion primitives for autonomous vehicles. Alex Completed his undergraduate and graduate engineering work at Concordia University in Montreal. 
\end{IEEEbiography}

\begin{IEEEbiography}[{\includegraphics[width=1in,height=1.25in,clip,keepaspectratio]{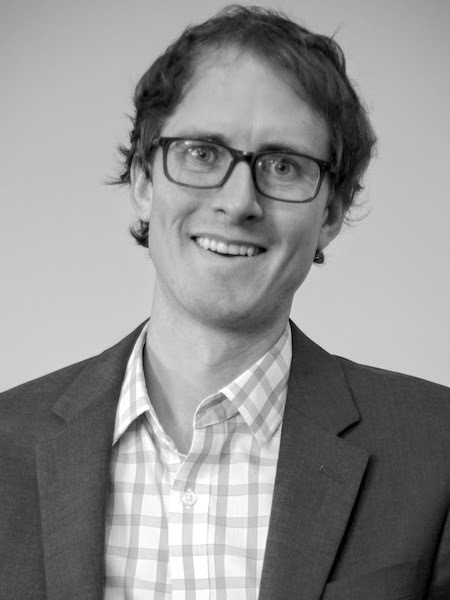}}]{Stephen L. Smith} (S'05--M'09--SM'15) received the B.Sc. degree from Queen’s University, Canada in 2003, the M.A.Sc. degree from the University of Toronto, Canada in 2005, and the Ph.D. degree from UC Santa Barbara, USA in 2009. From 2009 to 2011 he was a Postdoctoral Associate in the Computer Science and Artificial Intelligence Laboratory at MIT, USA.
He is currently an Associate Professor in Electrical and Computer Engineering at the University of Waterloo, Canada and a Canada Research Chair in Autonomous Systems.  His main research interests lie in control and optimization for autonomous systems, with an emphasis on robotic motion planning and coordination.
\end{IEEEbiography}

\end{document}